\documentclass{article}

\usepackage[preprint]{neurips_2026}
\usepackage{verbatim}

\usepackage[utf8]{inputenc} % allow utf-8 input
\usepackage[T1]{fontenc}    % use 8-bit T1 fonts
\usepackage{hyperref}       % hyperlinks
\usepackage{url}            % simple URL typesetting
\usepackage{booktabs}       % professional-quality tables
\usepackage{amsfonts}       % blackboard math symbols
\usepackage{nicefrac}       % compact symbols for 1/2, etc.
\usepackage{microtype}      % microtypography
\usepackage{xcolor}         % colors

\usepackage{microtype}
\usepackage{graphicx}
\usepackage{subcaption}
\usepackage{booktabs} % for professional tables
\usepackage{amsmath}
\usepackage{amsthm}
\newtheorem{theorem}{Theorem}[section]
\newtheorem{definition}{Definition}[section]
\newtheorem{remark}{Remark}[section]
\newtheorem{lemma}{Lemma}[section]

\usepackage{amssymb}
\usepackage{mathtools}
\usepackage{wrapfig}
\usepackage{algorithm}
\usepackage{algorithmic}
\usepackage{fancyhdr}
\usepackage{listings}
\usepackage{eso-pic} % used by \AddToShipoutPicture
\usepackage{url}
\usepackage{caption}

\usepackage[export]{adjustbox}

\newcommand{\val}{\textsc{Val}}
\newcommand{\fpos}{f_{\textsc{Num}}}
\newcommand{\fsym}{f_ {\textsc{Let}}}

\newcommand{\spos}{s_\textsc{Pos}}
\newcommand{\ssym}
{s_\textsc{Sym}}

\newcommand{\Hcal}{H}

\newcommand{\posinp}{\sigma_{\textsc{Num}}}
\newcommand{\syminp}{\sigma_{\textsc{Let}}}

\newcommand{\softmax}{\mathrm{softmax}}
\definecolor{tabred}{rgb}{0.8, 0.18, 0.15}
\definecolor{tabblue}{rgb}{0.14, 0.46, 0.68}
\definecolor{tabgreen}{rgb}{0.14, 0.65, 0.15}
\definecolor{taborange}{rgb}{0.90,0.49,0.13} % a common orange tone

\newcommand{\mygreen}{\textcolor{tabgreen}{\textbf{green}}}
\newcommand{\myred}{\textcolor{tabred}{\textbf{red}}}

\lstdefinestyle{pythonstyle}{
  language=Python,
  basicstyle=\ttfamily\scriptsize,
  keywordstyle=\color{blue},
  commentstyle=\color{gray},
  stringstyle=\color{teal},
  numberstyle=\tiny,
  frame=none,
  breaklines=true
}

\title{Positional versus Symbolic Attention Heads: Learning Dynamics, RoPE Geometry, and Length Generalization}

\author{%
\small
\begin{tabular}{@{}c@{\hspace{4.0em}}c@{}}
\begin{tabular}[t]{c}
Felipe Urrutia \\
CENIA \& Faculty of Mathematics UC \\
Santiago, Chile \\
{\scriptsize\texttt{felipe.urrutia@uc.cl}}
\end{tabular}
&
\begin{tabular}[t]{c}
Juan José Alegría \\
CENIA \\
Santiago, Chile \\
{\scriptsize\texttt{jotaj.8a@gmail.com}}
\end{tabular}
\\
\noalign{\vskip 1.4em}
\begin{tabular}[t]{c}
Cinthia Sanchez Macias \\
CENIA \\
Santiago, Chile \\
{\scriptsize\texttt{mabel.sanchez.macias@gmail.com}}
\end{tabular}
&
\begin{tabular}[t]{c}
Jorge Salas \\
CENIA \\
Santiago, Chile \\
{\scriptsize\texttt{jorge.salas@cenia.cl}}
\end{tabular}
\\
\noalign{\vskip 1.4em}
\begin{tabular}[t]{c}
Cristian B. Calderon \\
CENIA \\
Santiago, Chile \\
{\scriptsize\texttt{cristian.buc@cenia.cl}}
\end{tabular}
&
\begin{tabular}[t]{c}
Cristobal Rojas \\
IMC UC \& CENIA \\
Santiago, Chile \\
{\scriptsize\texttt{luis.rojas@uc.cl}}
\end{tabular}
\end{tabular}
}

\begin{document}

\maketitle

\begin{abstract}
%Transformer-based language models increasingly shape high-impact domains, making it important to understand the mechanisms by which they solve structured tasks and to predict how they may behave in unseen scenarios.
%Mechanistic interpretability is crucial for explaining how Transformer-based models solve specific tasks and for anticipating how they may behave in unseen scenarios.

Transformer-based language models are widespread in today's society. As such, understanding the mechanisms by which they solve structured tasks and predicting how they may behave in novel scenarios is of great importance for safe deployment. We study the learning dynamics of attention heads in a controlled setting by training a decoder-only Transformer (GPT-J) on two structurally equivalent multi-hop reasoning tasks: a number task requiring positional reasoning and a letter task requiring symbolic reasoning. Using a recently introduced metric that classifies attention-head behavior as positional or symbolic for a given prompt, we show that successful learning is associated with the emergence of pure heads, i.e., heads that express themselves as either positional or symbolic. Despite the tasks' structural equivalence, they impose different mechanistic demands: the number task requires both positional and symbolic heads, whereas the letter task requires only symbolic heads.
We then identify the computational roles of these heads, characterize the basic functions they implement, and give theoretical constructions showing how single-layer RoPE-based attention can realize these functions through geometrically interpretable query, key, and value operations. This analysis yields a quantitative separation between positional and symbolic mechanisms in their robustness to longer sequences, formalized through a novel notion of discrepancy. We empirically validate the resulting predictions in both controlled and real-world models, showing that symbolic mechanisms extrapolate more reliably to longer sequences while positional mechanisms face sharper limitations.

\end{abstract}

\section{Introduction}

Transformer-based LLMs are shaping distinct aspects of society, ranging from education \citep{chu2025llm} to work productivity \citep{cambon2023early}. As such, it becomes pressing to develop a profound understanding of how these models actually work. Such an endeavor has led to the emergence of Mechanistic Interpretability as a scientific domain to systematically study how state-of-the-art Transformer-based (and other) architectures solve specific tasks ~\citep{conmy2023towards,elhage2021mathematical}. This field has had concrete impacts in bias control~\citep{hegde2024effectiveness}, AI safety~\citep{bereska2024mechanistic} and model customization~\cite{cammarata2025painting}, among other applications. 

A common line of inquiry has centered on elucidating the role of attention heads in a model’s underlying mechanisms~\citep{elhage2021mathematical} or circuits~\citep{lindsey2025landscape}. Recently, \citet{urrutia2025decoupling} proposed a novel interpretability method that can shed light on interesting properties of attention head behavior. In particular, this method reveals whether, given an input sequence, a particular head behaves positionally or symbolically on it. Intuitively, a positional head pays attention to specific locations in a given sequence, whereas a symbolic head pays attention to specific symbols, irrespective of their location (see \citet{urrutia2025decoupling} for details). Interestingly, using this method, these authors observed that in many real models, early layers heads are tuned to behave positionally, while deeper layers heads are tuned to behave symbolically. %Their analysis also reveals a clear association between different frequencies in RoPE-based models and head type behavior: while slow frequencies are used by the model to implement positional behavior, faster sequences support symbolic behavior. 

Understanding attention mechanisms through the lens of positional versus symbolic head behavior can be potentially relevant at the practical scale. Indeed, empirical evidence has shown that symbolic mechanisms are likely to have better generalization properties than positional ones~\citep{barbero2024round}. Moreover, positional and symbolic attention head behavior have been tightly linked to mechanisms underlying state-of-the-art Rotary Positional Encoding (RoPE; \cite{su2024roformer}). In particular, \citet{urrutia2025decoupling} uncovered an interpretable causal relation between RoPE frequency use, head behavior, and model performance of single layer models. 

Whereas most research has focused on investigating the behavior of trained models, evaluating the learning dynamics of the underlying attention mechanisms can provide interesting insights in terms of the relation between emergent model properties and its performance \citep{sharkey2025openproblemsmechanisticinterpretability}. For instance, examining the learning trajectories in a multi-hop variable-binding task allowed to reveal that early-formed heuristics persist in initial layers while systematic variable-tracing mechanisms gradually unfold in deeper layers \citep{wu2025transformers}. In this work we tread a similar road and study, in a controlled setting, how attention mechanisms unfold during training through the lens of positional versus symbolic attention heads. Given the connection between reasoning abilities and multi-hop tasks \citep{liao2021hop}, we do so within a multi-hop task framework (see Appendix \ref{sec:relatedwork} for a detailed description of related work). Our research consists of detailed empirical work\footnote{Code for our experiments can be found at \href{https://anonymous.4open.science/r/number-and-letter-neurips26-C754/}{\includegraphics[height=1em]{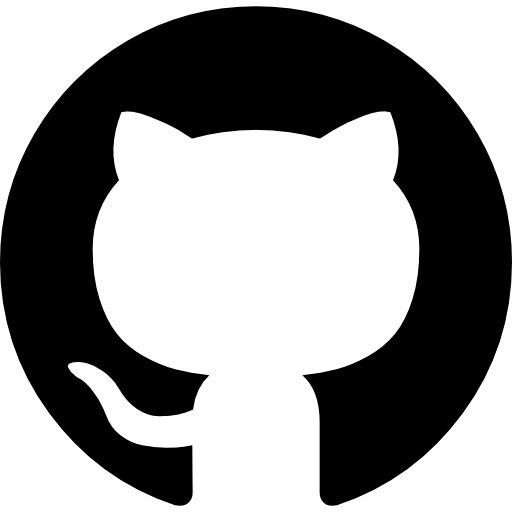} \texttt{number-and-letter-neurips26-C754}}.}, alongside theoretical analysis that allows us to make the following contributions:

\begin{enumerate}

  \item We present two structurally equivalent multi-hop reasoning tasks: the \emph{number task} that requires reasoning over positions, and the \emph{letter task} that requires reasoning over symbols.  We show that despite their structural equivalence, these tasks require distinct attention head behavior profiles. Moreover, by evaluating the learning profiles of positional and symbolic heads, we reveal that maximal accuracy in both tasks is related to the emergence of \textit{pure} attention heads, that is, heads expressing themselves as either positional or symbolic.

  \item By inspecting the information flow implemented by these pure heads, we uncover specific functions executed by each attention head-type to solve the tasks and provide a theoretical analysis of how these functions can, in principle, be implemented by single layers in isolation. We do so by presenting explicit constructions of query, key and value matrices within the standard RoPE arquitecture that realize each function through geometrically interpretable operations. We then use these geometric patterns to provide  evidence that the trained models largely implement the same mechanisms. 

  \item Our theoretical analysis reveals a strong  separation between positional and symbolic mechanisms in terms of their robustness to longer sequences. We introduce the notion of \emph{discrepancy} and use it to theoretically express this separation in a quantitative way.

  \item We empirically validate concrete predictions from our theoretical results, both in our controlled setting and in real-world models. In particular, we show that these models can solve the letter task on substantially longer input sequences than the number task. These findings provide strong evidence for RoPE’s distinct strengths and limitations in longer-context multi-hop reasoning.

\end{enumerate}

\section{Preliminaries}\label{sec:preliminaries}
\subsection{Number-based and letter-based multi-hop tasks}

\begin{figure*}[ht!]
    \centering
    \includegraphics[width=1\linewidth]{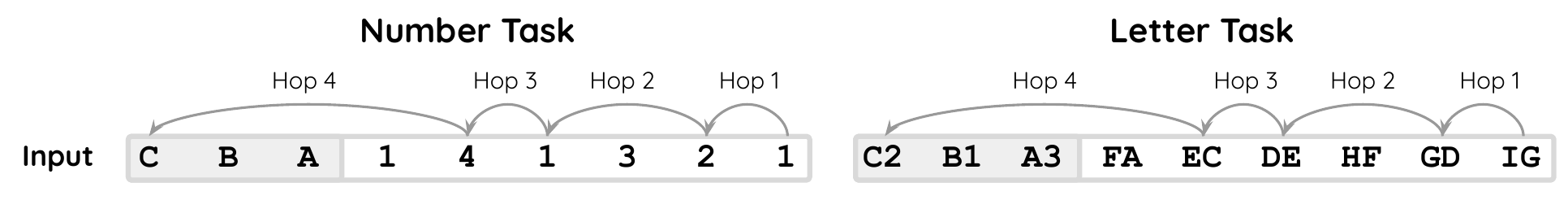}
    \caption{\textbf{Left: Number-based multi-hop task.} We illustrate this task via a 4-hop example, where numbers in the pre-target window (white background) define the steps to hop back until reaching the to-be-retrieved letter in the target window (grey background). %\textcolor{red}{\textbf{Red}} arrows identify the selective indexing mechanism to move the to-be-retrieved target towards the end of the sequence. \textcolor{gray}{\textbf{Grey}} letters depict the \textit{reflexive} mechanisms. 
    \textbf{Right: Letter-based multi-hop task.} Here, hops are defined by the letter pairs in the pre-target window, which determine the  steps to hop back until reaching the to-be-retrieved letter-number pair in the target window (grey background).% \textcolor{green}{\textbf{Green}} arrows identify the \textit{retrieval} mechanism to move the target towards the end of the sequence, and show how the symbol ``C2'' is dragged towards the end of the sequence. \textcolor{gray}{\textbf{Grey}} letters depict the \textit{reflexive} mechanisms.}
    \label{fig:main_hop}
    }
\end{figure*}

We evaluate two multi-hop tasks, one requiring reasoning over numbers and one requiring reasoning over letters (Figure \ref{fig:main_hop}). In general terms, a multi-hop task requires a sequence of interdependent reasoning steps (``hops''), where each step uses the output of the previous one. For both tasks, we define two input sequence windows over which hops can take place: a pre-target window (white background in Figure \ref{fig:main_hop}) and a target window (grey background in Figure \ref{fig:main_hop}). The pre-target window contains instructions to implement the multi-hop sequence, and the target window contains the response that needs to be retrieved. For both tasks, the multi-hop reasoning always starts from the end of the sequence, and finishes in some position of the target window. Figure \ref{fig:main_hop} depicts two examples of 4-hop instances in both tasks. Note that our dataset contains 1-, 2-, 3- and 4-hop sequences. For the sake of simplicity, our data set only contains sequences for which the multi-hop path is well defined and unique (we provide formal task descriptions together with intuitive dataset construction explanations in Appendix \ref{app:tasks}).

\textbf{Number task.} In the number-based task, numbers in the pre-target window define the hops to be executed. For instance, in Figure \ref{fig:main_hop} (left), the first number indicates that one must jump 1 step back on the sequence (hop 1), which leads to making another jump of 2 steps (hop 2), a third one of 1 step (hop 3), and a final hop of 4 steps (hop 4), to retrieve the letter ``C''.    

\textbf{Letter task.} In the letter-based task,  letter pairs in the pre-target window define the hops to be executed. More specifically, the second letter in the pair defines which first letter of an earlier pair in the sequence the hop must reach. For instance, in Figure \ref{fig:main_hop} (right), the second letter of the first pair (``G'') indicates that hop 1 must step to the pair with ``G'' as its first letter. In turn, the second letter of that pair (``D'') indicates that hop 2 must step back to the pair with ``D'' as its first letter, and so on until retrieving a number-letter pair ``C2'' in the target window.

Our tasks are chosen to provide a simple controlled setting for mechanistic analysis. Although structurally similar, they are expected to require distinct mechanisms, whose analysis may help clarify how more complex tasks are solved through their combination.

\subsection{Positional and symbolic attention head scores}

{To assess mechanistic differences between the tasks, we use the scores introduced by \citet{urrutia2025decoupling}, which characterize whether an attention head behaves positionally or symbolically. Informally, a head is positional if its attention distribution is invariant to input-token permutations, and symbolic if its attention distribution is equivariant to such permutations; see Appendix \ref{app:scores} for formal details and motivations for the use of these metrics.}

\section{Results}\label{sec:emergence}

To develop mechanistic intuition and enable interpretable task comparisons, we train a standard 12-layer RoPE Transformer with one attention head per layer on both tasks (see Appendix \ref{app:architecture} for architecture, training, and parameter details). This single-head controlled setting reduces within-layer interactions, allowing us to isolate positional and symbolic mechanisms while preserving the core RoPE-based attention computation. We later test the resulting predictions in real-world multi-head models, supporting the relevance of this simplified setup.

\subsection{Relation between attention head behavior and task accuracy}\label{sec:learning_dynamics}

\begin{figure*}[!ht]
\centering
\includegraphics[width=1\textwidth]{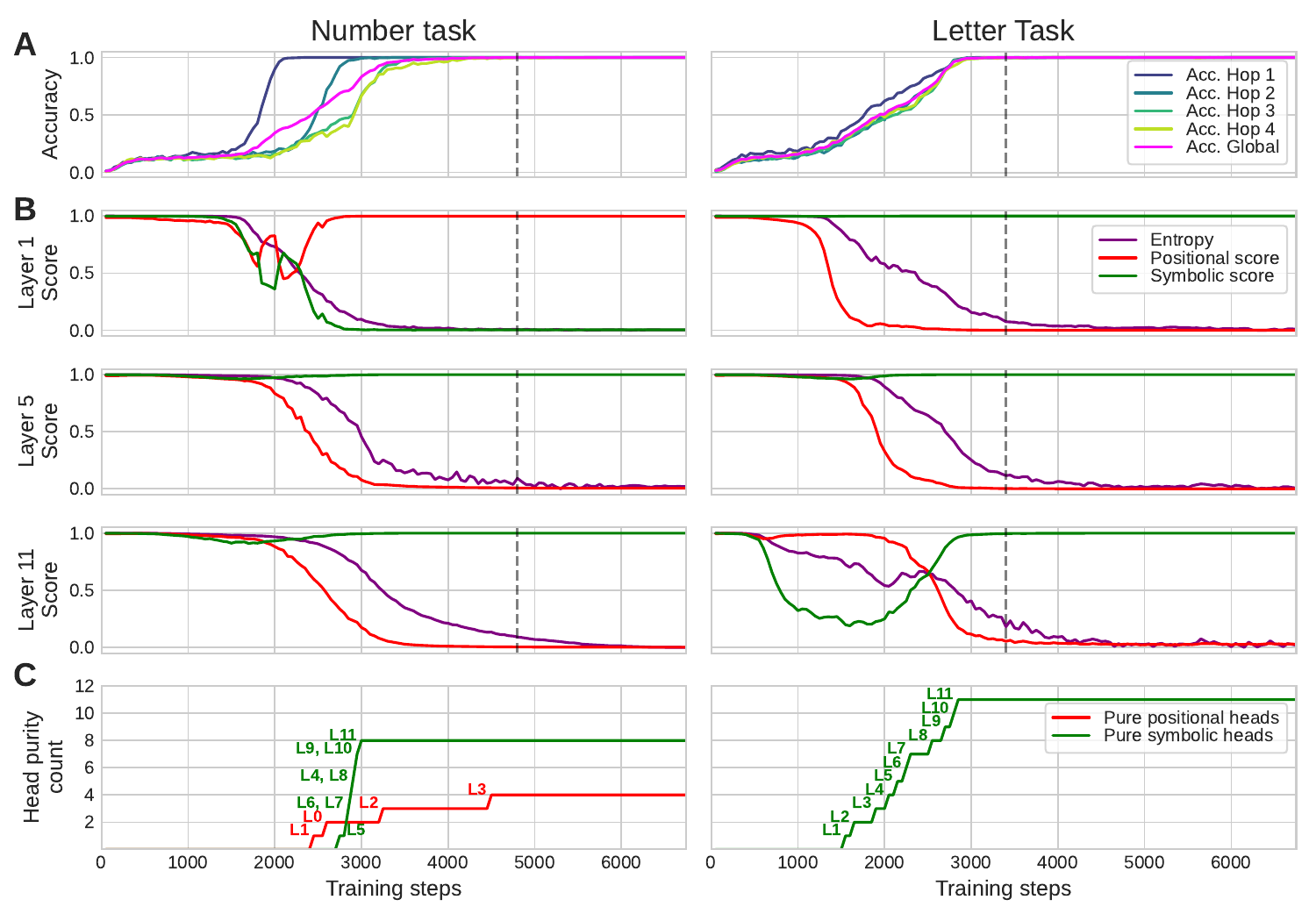}
    \caption{\textbf{A. Task accuracy.} Left: Lines depict the test set accuracies on the number task for all $n$-hop conditions as a function of training stems (color code in the legend), and averaged across all conditions (magenta line). Right: Same as left but for the letter task. Here, and in graphs from B, dashed vertical lines depict the moment of maximal task accuracy performance. \textbf{B. Positional and symbolic attention head score dynamics.} Columns represent tasks (left = number task, right = letter task) and rows represent layer depths (1, 5 and 11, see Appendix \ref{app:full_results} for results in all layers). In all graphs, red and green lines reflect the average positional and symbolic scores as a function of training steps. Purple lines depict the average normalized entropy values of the attention distribution. \textbf{C. Dynamics of head purity counts.} Red and green lines reflect the temporal tally of head purity for positional and symbolic heads, respectively (left = number task, right = letter task). We further indicate the layer depth number associated to the emergence of head purity.} \label{fig:acc}
\end{figure*}

We evaluated whether average (across hop conditions) task accuracy (magenta line in Figure \ref{fig:acc}A) was associated with the emergence of attention head behavior. The learning dynamics of the latter are reflected by red and green lines in Figure \ref{fig:acc}B, respectively for positional and symbolic attention head scores of layers 1, 5 and 11 (see Appendix \ref{app:full_results} for results on all layers). Note that both scores always start high, which according to the exclusion principle \citep{urrutia2025decoupling}, is indicative of a uniform distribution over attention scores, here reflected by high attention distribution entropy values (purple line in Figure \ref{fig:acc}B)\footnote{This is simply due to the fact that weights are initialized randomly w.r.t. a gaussian distribution centered at 0.}. After some learning steps, interesting patterns emerge. In both tasks, we observed that heads behavior must be \textit{pure}\footnote{We consider a head as pure once positional and symbolic scores have both converged to sufficiently high and low values} in order for test accuracy to converge to its maximal value. We temporally aligned the moment at which accuracy converges to its maximal value (maximal accuracy) with head behavior dynamics, and depicted the time point of maximal accuracy (grey dashed lines in Figure \ref{fig:acc}A) in all subplots of Figure \ref{fig:acc}B. We observe that when accuracy converges, both head types are pure. The later implies that the underlying mechanisms giving rise to this purity are tightly related to task learning. Such results validate the utility of these metrics, which act as a powerful lens tool that can purposefully direct, more fine-grained mechanistic assessments responsible of solving the tasks (see Section \ref{sec:mechanisms}).

\subsection{Mechanism implemented by pure heads}
\label{sec:mechanisms}

To move from head-purity scores to concrete computations, we inspected the attention patterns and information flow of trained models on correctly solved inputs. This analysis reveals that pure heads are mainly implementing a small set of recurring operations, which naturally decompose the two multi-hop tasks into simpler sub-tasks.

\begin{figure*}[ht!]
    \centering
    \begin{subfigure}{0.47\textwidth}
        \centering
        \includegraphics[width=1.05\linewidth]{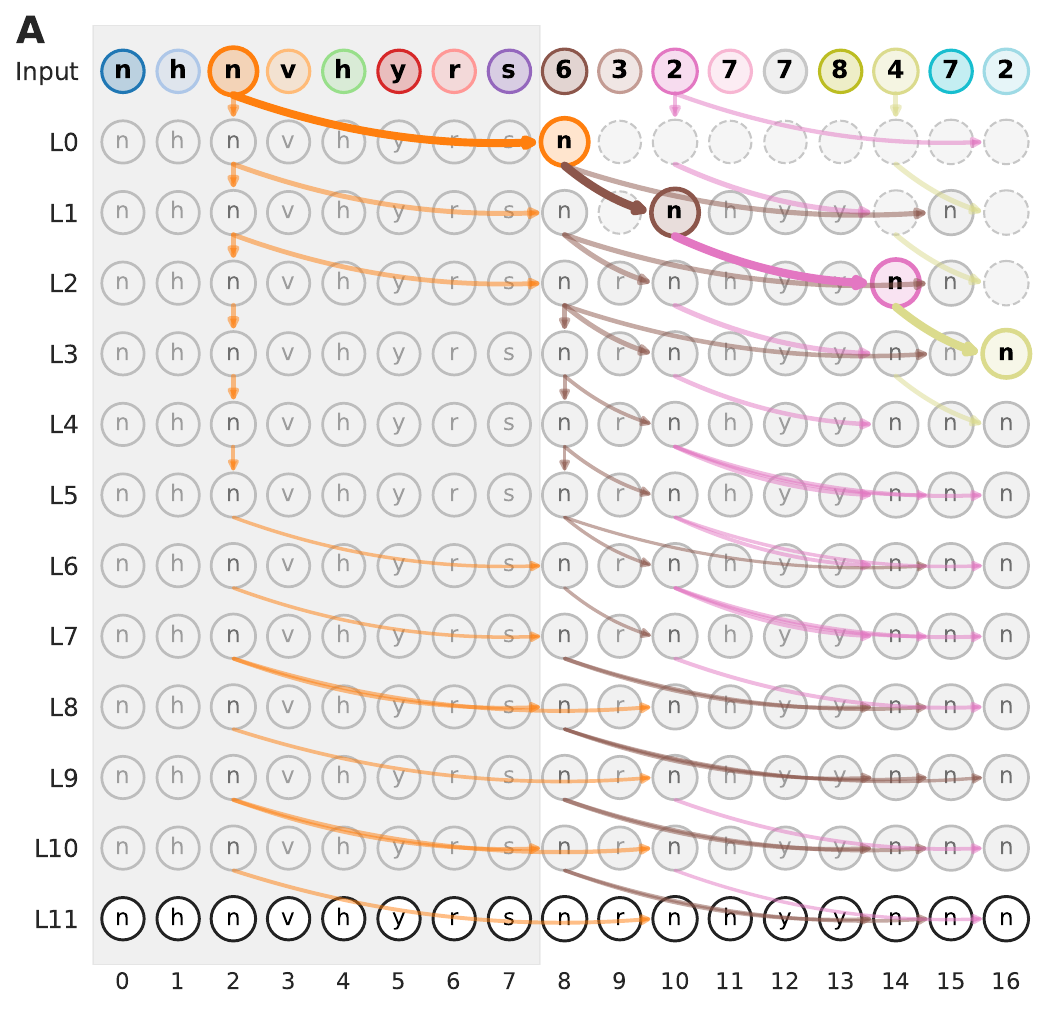}
    \end{subfigure}
    \hfill
    \begin{subfigure}{0.47\textwidth}
        \centering
        \includegraphics[width=1.05\linewidth]{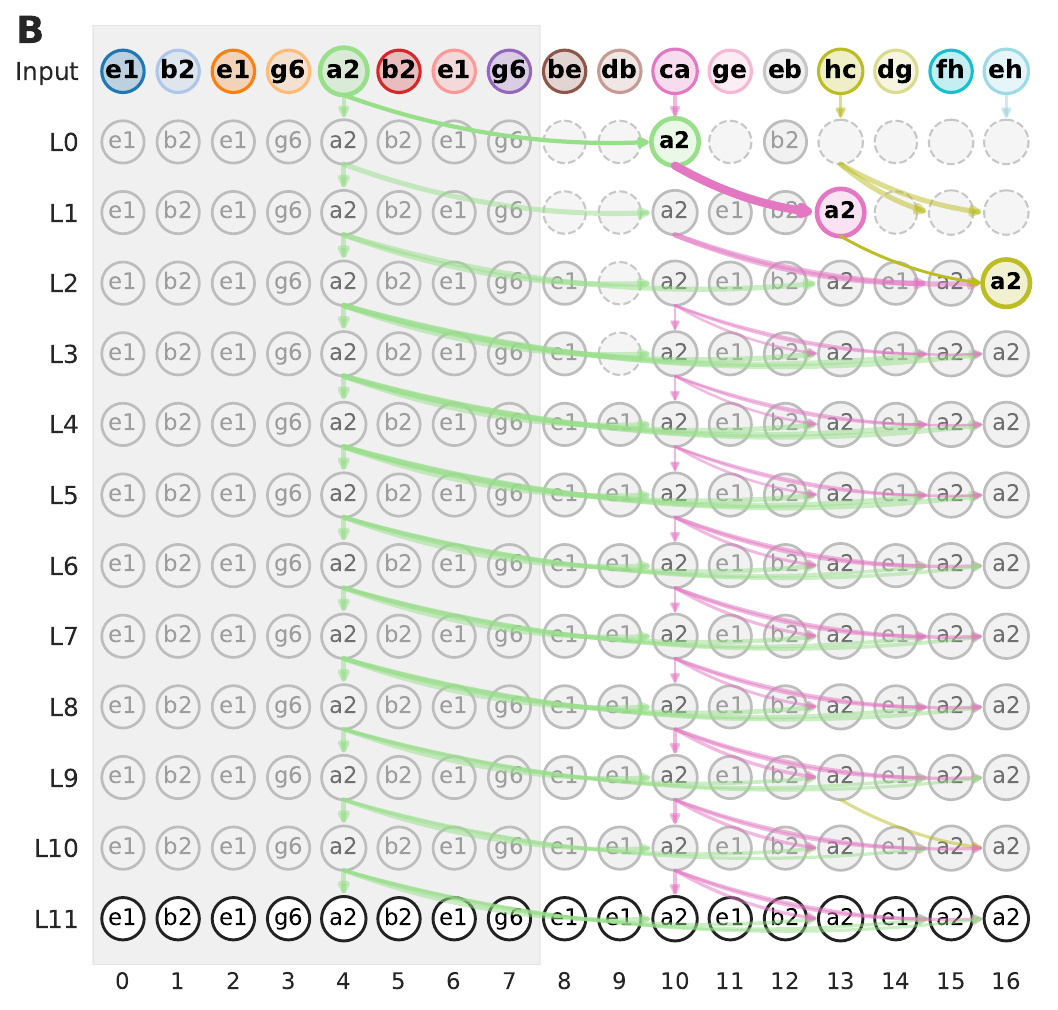}
    \end{subfigure}
    \includegraphics[width=1\linewidth]{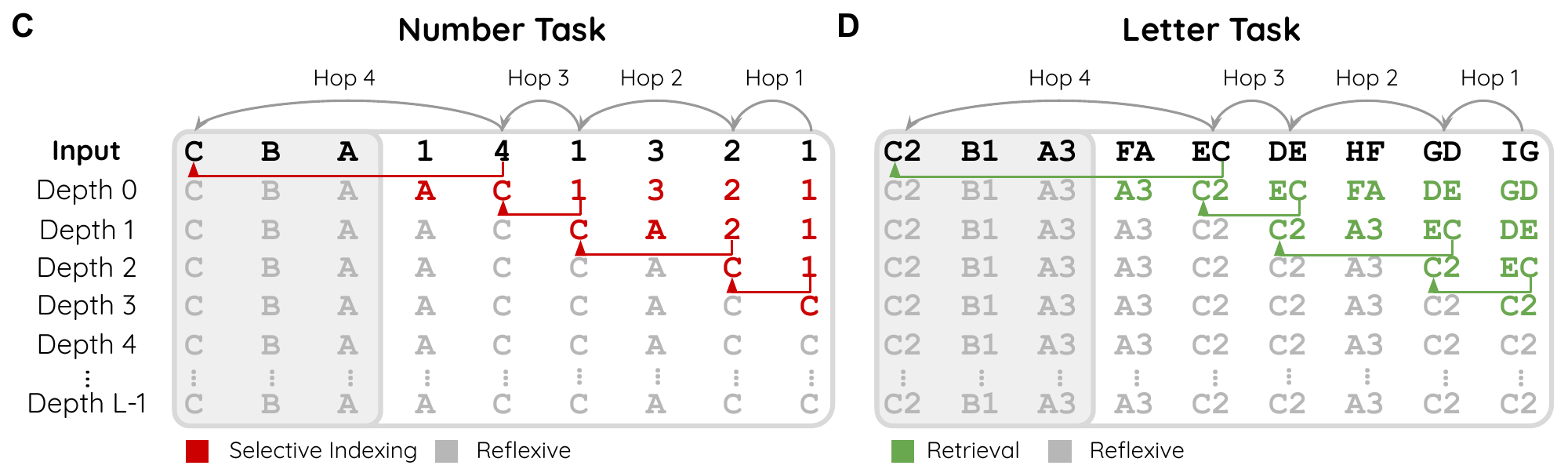}
    \caption{\textbf{Left Column: Number task.} We illustrate this task via a 4-hop example, where numbers in the pre-target window (white background) define the steps to hop back until reaching the to-be-retrieved letter in the target window (grey background).  
    \textbf{Right Column: Letter  task.} Here, hops are defined by the letter pairs in the pre-target window, which determine the  steps to hop back until reaching the to-be-retrieved letter-number pair in the target window (grey background). \textbf{Upper Row:} Trained GPT-J model. \textbf{Lower Row:} Idealized Index, Retrieval and Reflexive functions.}
    \label{fig:pos_sym_analysis}
\end{figure*}

Figures \ref{fig:pos_sym_analysis}\textbf{A} and  \ref{fig:pos_sym_analysis}\textbf{B} illustrate this flow for the Number and the Letter tasks, respectively. In the Number Task for example, if we focus on the last hop of the input sequence (from the token "6" to token "n") and evaluate the attention distribution from token "6" (the query), we observe that layer 0 maximally attends to "n", and given that this token is a letter, copies it. If we now fall back to the previous hop (from "2" to "6"), we observe that layer 1 again maximally attends the position of token "6", and given that it represents a letter ("n" in this case), copies it again. On the other hand, if we now focus on the last token of the input sequence (token "2"), we observe that in layers 1 and 2, attention is maximized at the token located at distance 2 from the last token (as it should be, since the last token holds a "2"), but there is no information flow. Only at layer 3, once the letter "n" has flown to that location, it is copied to the last position. To abstract away this operation, we introduce an idealized version that we call the \emph{Selective Index} function. 

From there on, since the answer is already at the the last token, information only needs to be propagated throughout the last layer. We call the corresponding idealized operation the \emph{Reflexive} function. An analogous analysis can be extrapolated to the letter task. We refer to the resulting idealized core operation as the \emph{Retrieval} function.

Selective Indexing is positional. At a position containing a number, it uses that number as a relative offset and copies the token found at the corresponding earlier position, but \emph{only} when the retrieved token is a letter. This is the core operation used in the number task. The Retrieval function is symbolic. If the current token is a letter-letter, it follows a symbol-to-symbol association by using the current token to identify the earlier pair containing the matching symbol, and always copies that pair. Finally, the Reflexive function simply preserves and propagates the current token representation through later layers. See Listing \ref{listing:functions} for pseudo-code formalization of these basic functions and Appendix \ref{sec:functions-pseudocode} for a formal definition. 

\begin{wrapfigure}{r}{0.45\textwidth}
\vspace{-1.8em}
\begin{lstlisting}[
  caption={Pseudocode of Index, Retrieval, and Reflexive functions.},
  captionpos=b,
  label={listing:functions},
  language=Python,
  basicstyle=\ttfamily\scriptsize,
  keywordstyle=\color{blue},
  commentstyle=\color{gray},
  stringstyle=\color{teal},
  numberstyle=\tiny,
  frame=single,
  breaklines=true
]
def index(seq):
    out = seq.copy()
    for n in range(len(seq)):
        if is_number(seq[n]):
            i = seq[n]
            if is_letter(seq[n-i]):
                out[n] = seq[n-i]
    return out

def retrieval(seq):
    out = seq.copy()
    for n in range(len(seq)):
        if is_letter(seq[n]):
            for i in range(n):
                if seq[i][0] == seq[n][1]:
                    out[n] = seq[i]
                    break
    return out

def reflexive(seq): return seq
\end{lstlisting}
\vspace{-2.5em}
\end{wrapfigure}

As we will show in later sections, these functions are not merely descriptive abstractions. We prove that \emph{Selective Indexing} and \emph{Retrieval} can each be realized by a single RoPE-based attention layer in isolation, and that their required mechanisms are fundamentally different: \emph{Selective Indexing} (from here on termed \emph{Index}) must be implemented positionally, whereas \emph{Retrieval} must be implemented symbolically. However, \emph{Reflexive} can in principle be implemented by either positional or symbolic mechanisms. The fact that trained models implement it symbolically is therefore a nontrivial empirical observation, made visible by the positional-symbolic scores.

These functions give a compact mechanistic decomposition of both tasks. For an input $\posinp$ to the number task with at most $h$ hops and a model with $\ell \geq h$ layers, the computation can be written as repeated applications of \emph{Index}, followed by \emph{Reflexive} propagation (see Figures \ref{fig:pos_sym_analysis}C and \ref{fig:pos_sym_analysis}D for an illustration):
\[
\fpos(\posinp) = \textcolor{black}{\mathrm{Reflexive}}^{\ell-h}\bigl(\textcolor{black}{\mathrm{Index}}^h(\posinp)\bigr)_n.
\]
Analogously, for an input $\sigma_{Let}$ to the letter task with length $n$ and $\leq h$ hops we have:
\[
\fsym(\syminp) = \textcolor{black}{\mathrm{Reflexive}}^{\ell-h}\bigl(\textcolor{black}{\mathrm{Retrieval}}^h(\syminp)\bigr)_n.
\]

Thus, although the two tasks have the same multi-hop structure, they rely on different core mechanisms: the number task is solved through repeated positional indexing, whereas the letter task is solved through repeated symbolic retrieval. In both cases, later layers can then preserve the retrieved information through reflexive propagation. 

Such observations provide the basis for our theoretical analysis of how these functions can be implemented by single-layer RoPE-based attention.

\subsection{Mechanistic explanation of hop-wise learning dynamics}

We next use this mechanistic decomposition to explain the different $n$-hop learning dynamics observed in Figure~\ref{fig:acc}A. In the number task, accuracy emerges progressively: the model first solves 1-hop inputs and then gradually learns 2-, 3-, and 4-hop conditions. This pattern follows from the mechanism above: reliably solving an $n$-hop condition requires the emergence of sufficiently many positional heads to perform successive selective-indexing steps, together with symbolic heads that propagate the retrieved information to the final layer. Consistent with this explanation, the cumulative emergence of pure heads shows a step-like pattern for positional heads, while symbolic propagation is already available early enough to support later positional computations (Figure~\ref{fig:acc}C).
In contrast, the letter task does not exhibit a hop-wise learning gradient. Since the task is solved primarily through symbolic heads, all hop conditions depend on the emergence of the symbolic computation in the final layer. The last required symbolic head appears in layer 11, which is needed for every $n$-hop condition; consequently, the model acquires the different hop conditions nearly simultaneously rather than progressively. Thus, the contrast between the tasks is not explained by hop count (i.e., intrinsic task difficulty) alone, but by the different temporal emergence of the mechanisms required to solve them.

\subsection{Plausible Geometric Mechanisms for realizing the Selective Index and Retrieval functions}\label{sec:feasibility}

We show that the Index and Retrieval functions can both be implemented by single-layer decoder-only Transformers with a single attention head and no activation function, ensuring that the relevant computation is carried primarily by the attention module. Importantly, these constructions do not rely on ad-hoc components: they use standard RoPE with a single frequency and softmax attention. Our goal is not only to establish expressivity in the usual sense, but to provide geometrically interpretable mechanisms that are plausible to learn and whose signatures can be checked in trained models. This connection is what later allows us to relate the theoretical limitations of the constructions to the behavior of trained controlled models and real-world models.

\begin{theorem}[Feasibility]\label{teo:mechanisms}
Fix a maximum input context-length $n_0> 1$. Then, there exists a Transformer $T_\mathrm{Index}$ with a two-dimensional RoPE-based attention module with angle $\theta$, such that for every input $\bar{\sigma}$ (valid for the multi-hop number task) with length $n \leq n_0$, $T_\mathrm{Index}(\bar{\sigma}) = \mathrm{Index}(\bar{\sigma})$. Furthermore, there exists a Transformer $T_\mathrm{Retrieval}$ with a two-dimensional  RoPE-based attention module with angle $\theta$, such that for every input $\bar{\sigma}$ (valid for the multi-hop letter task) with length $n \leq n_0$, $T_\mathrm{Retrieval}(\bar{\sigma}) =\mathrm{Retrieval}(\bar{\sigma})$.
\end{theorem}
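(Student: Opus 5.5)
The plan is to build each Transformer explicitly as embedding, then one single-head RoPE attention layer with a two-dimensional rotary block of angle $\theta$, then a linear readout. The readout, combined with the residual stream, should realize ``copy if condition, else keep''. Only finitely many inputs have length $n\le n_0$ over a finite vocabulary, so a sufficiently large temperature (scaling of the query) makes softmax attention concentrate on the intended key up to an error $\varepsilon$. The readout then decodes the correct token by an argmax over finitely many separated embeddings. The argument therefore reduces to designing queries and keys whose pre-softmax scores have a strict unique maximizer at the intended position, with a uniform margin over all valid inputs of length at most $n_0$.

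\textbf{Index.} I would choose $\theta$ with $0<\theta<\pi/n_0$, so that the rotations $R_{\theta d}$ for offsets $d\in\{0,\dots,n_0-1\}$ are pairwise distinct and $\cos$ is strictly decreasing on $[0,(n_0-1)\theta]$. Take a query at position $m$ holding number $i$ and set its 2D RoPE component to $q=R_{-i\theta}u$ for a fixed unit vector $u$. Give every key the constant 2D component $k=u$. After RoPE the score is $\langle R_{m\theta}q,R_{j\theta}k\rangle=\cos((m-j-i)\theta)$, which is maximized uniquely at $j=m-i$ because $|m-j-i|\theta<\pi$. The margin is at least $1-\cos\theta>0$. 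The value matrix writes the key token's letter embedding together with a ``letter flag''. A separate non-rotated channel, or the readout, gates the update. One way is to include an extra dimension in the value that is $1$ for letters and $0$ for numbers, and let the output layer choose between the residual token and the attended token according to this flag. I would implement this as a linear map followed by the final argmax, since the statement allows no activation. Queries at letter positions get $q=0$, and the readout leaves them unchanged since no number flag is set in the residual. This yields a positional mechanism: the score depends only on $m-j$ and on the query's content.

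\textbf{Retrieval.} Here I would use RoPE with the rotation angle effectively neutralized in the relevant channel. One option is a frequency $\theta$ so small that $\cos(d\theta)\approx1$ for all $d<n_0$, with the resulting perturbation absorbed into the margin. The alternative, cleaner option is to place the content-matching vectors in the rotation-invariant part of the score. Concretely, embed each letter $a$ as a vector $e_a$ with $\langle e_a,e_b\rangle\le 1-\delta$ for $a\neq b$, for instance one-hot. Map the query's second letter to $e_{\sigma_m[1]}$ and the key's first letter to $e_{\sigma_j[0]}$. The score then equals $\langle e_{\sigma_m[1]},e_{\sigma_j[0]}\rangle$ up to a rotation error bounded by $O(n_0\theta)$. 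The validity assumption (unique hop path, unique matching earlier pair) gives a unique maximizer. If the letter task's validity does not preclude a repeated first letter, I would add a small positional tie-breaker to match the ``first $i$'' in the pseudocode. The value copies the whole pair, and the readout overwrites unconditionally, since Retrieval always copies at letter positions. Number or target tokens get a zero query vector, and the output falls back to the residual.

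The main obstacle is the conditional ``copy only if letter / only if letter-letter'' behaviour without nonlinearities, together with the fact that the causal mask makes the set of candidate keys vary with $m$. I would first attempt to handle the conditional through the final decoding argmax: the residual contributes the current token with weight $1$, and the attended value contributes with weight $2$ only when the flag dimension is active. Keeping every error $\varepsilon$ below half the embedding separation, uniformly over the finitely many inputs, then completes both constructions.
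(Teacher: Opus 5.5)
\textbf{Overall.} Your plan is essentially the paper's proof. For $T_{\mathrm{Index}}$ the paper also uses a constant key $v$ and number queries $\rho(\theta)^{-i}v$, so the score is $\cos((m-j-i)\theta)$. Its value matrix is $2$ on letter coordinates and $0$ elsewhere, the residual is added with weight $1$, and decoding is an argmax. For $T_{\mathrm{Retrieval}}$ it matches letters by content and treats the RoPE angle as a small perturbation. In both cases it passes from hard to softmax attention by scaling the logits over the finitely many inputs of length at most $n_0$ (Lemma~\ref{lemma:scaled_logit}). Your choice $0<\theta<\pi/n_0$ is cleaner than the paper's ``$\theta$ irrational'' (which really needs $\theta/\pi\notin\mathbb{Q}$), and it gives the explicit margin $1-\cos\theta$. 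Two steps, however, do not work as written.

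\textbf{Letter queries in $T_{\mathrm{Index}}$.} With $q=0$ every logit is $0$, so attention is exactly uniform for every temperature. The attention output is then $\frac{2}{m}$ times the sum of the letter one-hots in the prefix. Use the readout you finally commit to: residual weight $1$, attended letters weight $2$. Take a letter position whose prefix is $b\,b\,b\,b\,a$. The $a$-coordinate is $1+2/5$ while the $b$-coordinate is $8/5$, so the output is $b$ instead of $a$. Your remark that ``no number flag is set in the residual'' does not rescue this, because a readout linear in $(x,a)$ cannot multiply the attention output by a flag of the query. There are two fixes:
\begin{itemize}
\item The paper's fix is to give letter queries the unrotated vector $u$ (offset $0$). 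They then attend to themselves and the stream becomes $3E(\sigma_m)$.
\item Alternatively, give letter tokens a residual weight larger than $2$. A linear readout can do this because the residual is one-hot.
\end{itemize}
The same failure hits your zero query for target tokens in $T_{\mathrm{Retrieval}}$ under the same $1$/$2$ weights, for example with prefix $a1\,a1\,a1\,a1\,c2$. It can be repaired with token-dependent residual weights. Alternatively, do as the paper does: use $F(x,a)=a$ and let a target token's query $\rho(\omega)^{-\phi(\sigma^{\mathrm L})}v$ match its own key.

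\textbf{Dimension in $T_{\mathrm{Retrieval}}$.} One-hot letter vectors make queries and keys $|\textsc{Alph}|$-dimensional. Moreover, there is no ``rotation-invariant part'' inside a single two-dimensional rotary block with $\theta\neq0$. So this instance does not give the two-dimensional module the statement requires. Your separation condition $\langle e_a,e_b\rangle\le 1-\delta$ is nonetheless achievable in the plane. Place letter $a$ at angle $\phi(a)\omega$ with $\omega=2\pi/|\textsc{Alph}|$; this is exactly the paper's $W^QE(\sigma)=\rho(\omega)^{-\phi(\sigma^{\mathrm R})}v$, $W^KE(\sigma)=\rho(\omega)^{-\phi(\sigma^{\mathrm L})}v$. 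The condition $\theta<\omega/(2n_0)$ then makes your perturbation argument precise. Matching keys score $\cos((m-j)\theta)>\cos(\omega/2)$, while non-matching keys score strictly below $\cos(\omega/2)$. Properness of the input forces all maximizers to carry the same token, so no positional tie-breaker is needed; the RoPE drift already favours the nearest match.
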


\begin{wrapfigure}{r}{0.6\textwidth}
\vspace{-2em}
    \centering
\includegraphics[width=0.95\linewidth,valign=c]{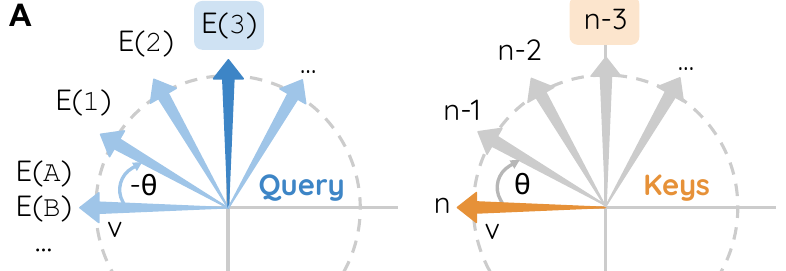} %\includegraphics[width=1\linewidth,valign=c]{figures/positional_tie_rope_flow_layer1_plane1_bix4.pdf}
\includegraphics[width=0.95\linewidth,valign=c]{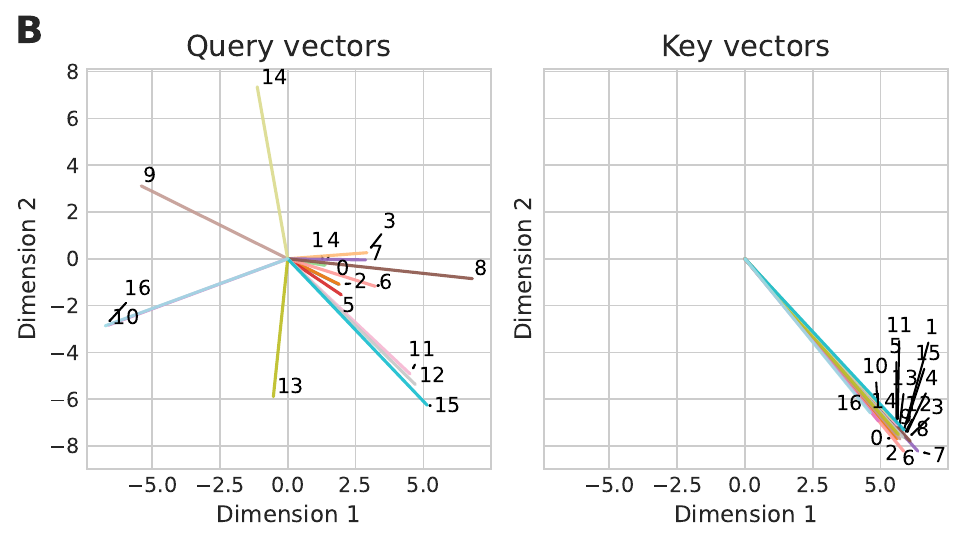}
\includegraphics[width=0.95\linewidth,valign=c]{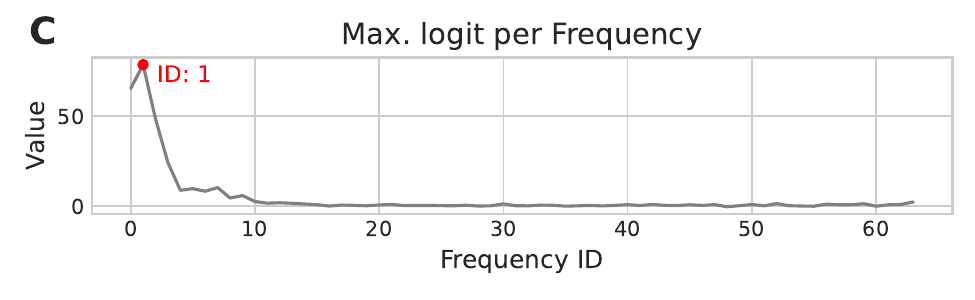}
    \caption{Illustration of the Index mechanism in the theoretical construction (\textbf{A}) and in a trained model (\textbf{B}). Panel \textbf{C} identifies the dominant RoPE frequency plane at layer L1 for the input in Figure~\ref{fig:pos_sym_analysis}.}
    \label{fig:ill_index_task}
\vspace{-2em}
\end{wrapfigure}

The proof, that can be found in Appendix \ref{proof:teo}, relies on simple and interpretable geometric mechanisms. Figure \ref{fig:ill_index_task}A illustrates the mechanism used to implement the Selective Index function.

The idea is very simple. The key point is that, in order for attention weights to depend only on position, all the key vectors must be aligned, for example equal to some arbitrarily chosen vector, a rol played by $v$ in Figure \ref{fig:ill_index_task}A. Then, different query vectors must hard-code the different query positions (where to look) by organizing in such a way that after RoPE, each of them has been rotated exactly to match the vector $v$ at the queried position, and not elsewhere. Figure \ref{fig:ill_index_task}B shows the key and query vectors in layer 1 of our trained model, projected to the two dimensional rotational plane corresponding to rotation angle in RoPE that contributes the most to the attention distribution (see Figure \ref{fig:ill_index_task}C). Note the similarity of the keys and queries organization pattern between the model and the theoretical construction.    

\medskip 

\begin{remark}\label{rem:impossibility}
   Note that our constructed models $T_\mathrm{Index}$ and $T_\mathrm{Retrieval}$ have, respectively, purely positional and purely symbolic attention heads. In fact, as we show in Appendix \ref{app:proof_rem}, the Index function cannot be realized by a purely symbolic head, while the Retrieval function cannot be computed by a purely positional one. 
\end{remark}

\subsection{Robustness Analysis}

 A fundamental limitation shared by all such constructions arises from the use of softmax attention with bounded logits. As the input length increases, attention weights necessarily disperse across more tokens, reducing the effective signal carried by any single attended position \citep{pasten2025continuity}. This phenomenon imposes an intrinsic obstacle to length generalization that applies to both mechanisms considered here. To study more fine-grained differences between mechanisms, we introduce a quantity that captures limitations of the logit function to focus on a suited token, independently of the softmax normalization. 

\begin{definition}[Discrepancy] The \emph{discrepancy} $\Delta$ of an attention head is the maximal difference between the largest and second-largest logit produced by the model on inputs of a certain length. 
\end{definition}
Intuitively, a discrepancy of zero means that the model will not be able to distinguish the correct token, potentially leading to incorrect responses. The following theorem provides upper bounds for the discrepancies of our constructed models. 

\begin{theorem}[Robustness]\label{teo:discrepancy}
Consider the Transformers $T_{\mathrm{Index}}$ and $T_{\mathrm{Retrieval}}$ constructed in Theorem~\ref{teo:mechanisms}. Let $n>1$ denote an arbitrary input length. Then, for $T_\mathrm{Index}$, the discrepancy satisfies $\Delta_\mathrm{Index}(n) \leq \min \left\lbrace1-\cos(\theta),2 \frac{\pi^2}{n^2}\right\rbrace.$ Now let $\omega=2\pi/|\textsc{Alph}|$ and $\theta < \omega/2n_0$. Then, for $T_\mathrm{Retrieval}$  we have that $
    \Delta_\mathrm{Retrieval}(n) \leq 1-\cos\left(\omega - n\theta \right).$ 
%In particular, if $\theta = 0$ (NoPE), then $\Delta_\mathrm{Retrieval}(n)$ remains away from zero for arbitrary lengths. 
\end{theorem}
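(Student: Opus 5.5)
The plan is to work directly with the geometry of the two constructions from Theorem~\ref{teo:mechanisms}. In both, the keys and queries live in a two-dimensional RoPE plane and are unit vectors, so every logit is a cosine: $\langle R_{i\theta}k_i, R_{j\theta}q_j\rangle=\cos(\phi_{ij})$, where $\phi_{ij}$ is the relative angle after rotation. The largest possible logit is therefore $1$. The discrepancy is controlled by how close the second-largest cosine can get to $1$.

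\textbf{Index.} In $T_{\mathrm{Index}}$ every key equals the same vector $v$. A query at position $m$ holding number $i$ is pre-rotated so that, after RoPE, it matches $v$ exactly at offset $i$. The logit at offset $d$ is then $\cos((d-i)\theta)$. The top logit is $1$, attained at $d=i$. A competitor sits at offset $d=i\pm1$ and gives $\cos\theta$, so $\Delta_{\mathrm{Index}}\le 1-\cos\theta$.

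For the second bound, note that any admissible $\theta$ must keep the $n$ offsets $\{0,\dots,n-1\}$ distinguishable modulo $2\pi$. By pigeonhole, among the $n$ angles $d\theta \bmod 2\pi$ on the circle, two of them, say $d_1\theta$ and $d_2\theta$, are within $2\pi/n$ of each other. Choose the query so that its target is $d_1$; I will need to check that some valid input realizes this target, using the freedom in the task's inputs. Then the competitor $d_2$ has logit at least $\cos(2\pi/n)$. Using $1-\cos x\le x^2/2$ gives $\Delta\le 2\pi^2/n^2$. Taking the minimum of the two bounds finishes this part.

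\textbf{Retrieval.} In $T_{\mathrm{Retrieval}}$, each symbol $a\in\textsc{Alph}$ is encoded at angle $a\omega$. Keys carry the first letter of a pair and queries carry the second letter, so matching symbols align up to the RoPE rotation by the relative distance $d\le n$. The correct token has logit $\cos(d\theta)$, where the construction ideally compensates this offset. A wrong symbol differs by at least $\omega$ in encoding angle, and the positional rotation can move it toward the query by at most $n\theta$. So the worst competitor has angle gap at least $\omega-n\theta$; the condition $\theta<\omega/2n_0$ keeps this gap positive.

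Taking the correct logit $\le 1$ and the competitor's logit $\ge\cos(\omega-n\theta)$, realized by choosing the adjacent symbol at the extreme distance, yields $\Delta_{\mathrm{Retrieval}}\le 1-\cos(\omega-n\theta)$.

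\textbf{Main obstacle.} The delicate step is showing that these worst cases actually occur among valid task inputs of length $n$, given the uniqueness constraints on hop paths. This matters because each upper bound on a maximum over inputs needs a witnessing input where some competitor is near-aligned. For Index, the question is whether a pre-target number can point to $d_1$ while a letter sits at $d_2$. For Retrieval, the question is whether a pair with the neighboring symbol can be placed at distance close to $n$.

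I would first try the simpler route of bounding the discrepancy over all logit configurations the head can produce. If the intended definition only quantifies over valid inputs, I would then build explicit witness sequences.
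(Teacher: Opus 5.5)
Your ingredients are the paper's: a competitor at the offset next to the target for $1-\cos\theta$, a pigeonhole on $\{t\theta \bmod 2\pi\}$ with $1-\cos x\le x^2/2$, and a letter adjacent to the queried one (angle $\pm\omega$) at the extreme distance for Retrieval. The gap is the quantifier. The paper defines $\Delta_\theta(n)=\max_{\bar\sigma}\Delta_\theta(\bar\sigma)$. Your principle that an upper bound on a maximum over inputs ``needs a witnessing input where some competitor is near-aligned'' is backwards. A witness only shows that \emph{that} input has a small margin, so it bounds $\min_{\bar\sigma}\Delta_\theta(\bar\sigma)$. Bounding the maximum needs a near-aligned competitor in \emph{every} length-$n$ input, so building witness sequences would not prove the statement as written. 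Only your $1-\cos\theta$ step is uniform in this sense. All Index keys equal $v$, so the logits depend only on $\sigma_n$ and the offset $n-j$, and an offset adjacent to the target exists in every input with $n>1$. That is why the paper can say ``for any input $\bar\sigma$''.

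Your other two steps are witness arguments, and so, in effect, are the paper's. Its pigeonhole never checks that the offset $t$ is available for every query. For Retrieval it maximizes $\cos((k-1)\omega+(n-j)\theta)$ over all letters $k$ and positions $j$, whether or not they occur in $\bar\sigma$. Under the literal max reading these two bounds cannot be proved by any route, because they fail:
\begin{itemize}
\item Index: take $m=2$, $n=4$, $\sigma_n=2$ and $\theta$ near $2\pi/3$. The competitor logits $\cos\theta,\cos\theta,\cos 2\theta$ are all near $-1/2$, so the discrepancy is near $3/2>\pi^2/8=\min\{1-\cos\theta,2\pi^2/n^2\}$.
\item Retrieval: take $|\textsc{Alph}|=8$ and an input whose non-target keys (including the query's own key) all carry the letter antipodal to $\sigma_n^{\mathrm{R}}$. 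Its discrepancy exceeds $2\cos(\omega/2)\approx 1.85$, while $1-\cos(\omega-n\theta)\le 1-\cos\omega\approx 0.29$.
\end{itemize}
What these witness arguments really bound is the smallest margin over length-$n$ inputs. That is the quantity $\lambda(n)$ used in Lemma~\ref{lemma:scaled_logit}, and the one that governs robustness. State the theorem that way; then a witness is exactly what is needed, and your ``main obstacle'' is easy.

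For Index, take a letter query (target offset $0$). Your pigeonhole on the $n$ points $0,\theta,\dots,(n-1)\theta$ is tighter than the paper's, which lets $t$ reach $n$. It gives $0\le a<b\le n-1$ with offset $b-a$ present in the input, so there is no need to pin the target at $d_1$. For Retrieval, put at position $1$ a key whose left letter precedes $\sigma_n^{\mathrm{R}}$ by one step in $\phi$, so RoPE rotates it toward the query, and put the target elsewhere. Relative distances only reach $n-1$, so this witness gives $1-\cos(\omega-(n-1)\theta)$ rather than $1-\cos(\omega-n\theta)$. The paper's proof has the same off-by-one.
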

The proof is provided in Appendix~\ref{proof:teo}. 

\begin{remark}
Theorem \ref{teo:discrepancy} can be thought of as imposing hard limitations on the ability of the model to achieve perfect accuracy when the length of the input increases, for given choices of the corresponding angle in RoPE. Note the strong separation between the Index and the Retrieval functions in terms of how their discrepancies decay with longer inputs. In particular, if $\theta = 0$ (NoPE), then $\Delta_\mathrm{Retrieval}(n)$ remains away from zero for arbitrary lengths. 
\end{remark}

\subsection{Impact of symbolic versus positional mechanisms on sequence length generalization}
\label{sec:generalization}
\begin{figure}[htbp!]
    \centering
    \includegraphics[width=1\linewidth]{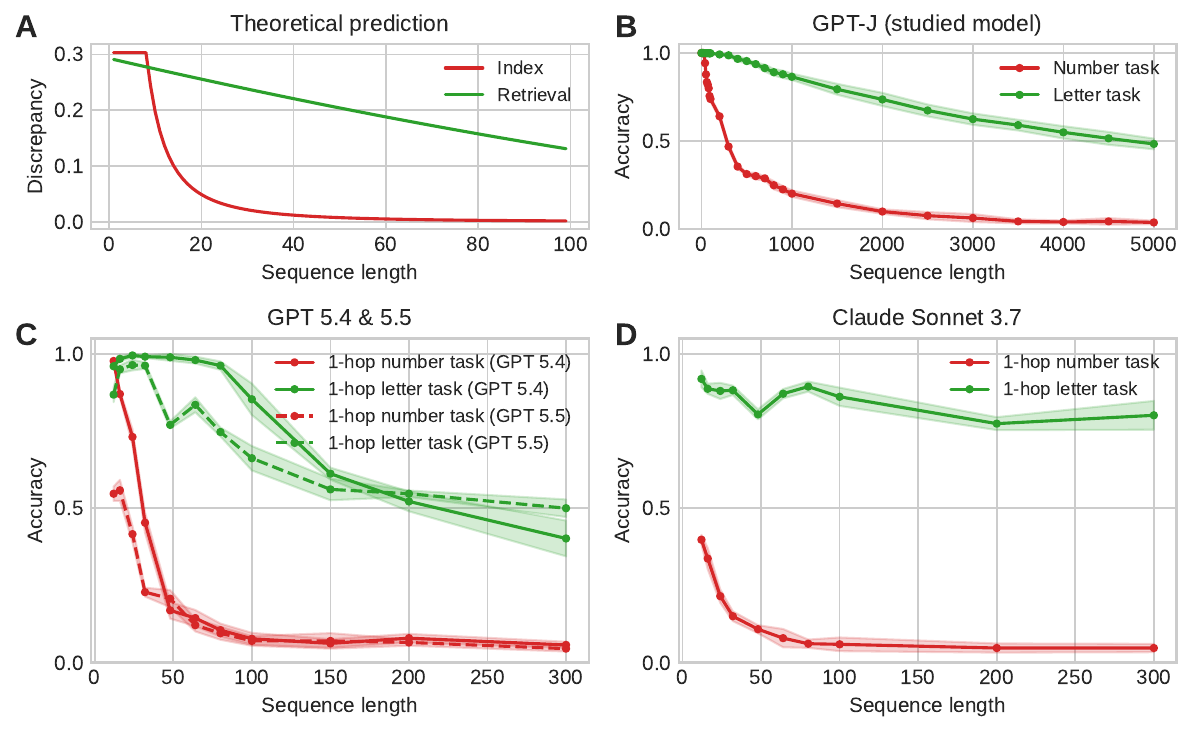}
    \caption{\textbf{Discrepancy and Generalization.} \textbf{A.} Theoretical upper bound of discrepancy for our constructions of the Index (\myred) and Retrieval (\mygreen) functions, for input length $n\leq 100$. We computed the upper-bound discrepancy from the Theorem \ref{teo:discrepancy} using the following parameters: For Index function, $\theta=\theta_2 = 0.8$, while for the Retrieval function, $\theta=\theta_{42} = 0.0027$ and $|\textsc{Alph}| = 8$. \textbf{B.} Accuracy of real models trained on the Number task (\myred) and Letter task (\mygreen), increasing the target window. \textbf{C.} Accuracy of GPT 5.4 and GPT 5.5 (in non-reasoning mode) on a 1-hop version of the  Number (\myred) and Letter (\mygreen) tasks. \textbf{D.} Accuracy of Claude Sonnet 3.7 (in non-reasoning mode) on the same 1-hop versions of the two tasks.}
    \label{fig:discrepancy_and_generalization}
\end{figure}

We now test how our trained models generalize to samples with input lengths that exceed those used during training. Here, we argue that our theoretical analyses, and in particular Theorem \ref{teo:discrepancy}, provide valuable insights to anticipate how each of our trained models should generalize. 

To test their generalization, we extended the target window by adding extra symbols prior to the window containing the to-be-retrieved target symbol. In other words, the target window is itself divided into two parts: an \textit{effective} target window where the response is contained, and an \textit{extra} target window to increase input length. In particular, the difficulty of the tasks remains intuitively unchanged, since the model does not even have to look at the added tokens in order to find the correct answer. This is equally true for both tasks. Yet, Theorem \ref{teo:discrepancy} predicts that the model solving the letter multi-hop task should have far better generalization. To have a better grasp of what it predicts, we have plotted the corresponding theoretical upper bounds for the discrepancy, which should serve as a proxy of generalization behavior. The angles chosen for this plot are taken from the angles used by the trained models. More precisely, we computed the weighted average angle for each layer in each model (see Figure \ref{fig:angles} and Table \ref{table:angles} in Appendix \ref{app:generalization}). The plot is made with the angles corresponding to the second layer in each model.  

As shown in Figure \ref{fig:discrepancy_and_generalization}B, the ability of our trained models for input length generalization on the Letter task (\mygreen ~line) far exceeds that of the Number task (\myred~line). More crucially, the qualitative patterns of decay in the generalization performance closely resemble those predicted by our theoretical model (see Figure \ref{fig:discrepancy_and_generalization}A). Further note that symbolic mechanisms in the letter task allow for near perfect generalization (i.e., over 90\% accuracy) with sequence lengths up to 850 (53 times the original sequence length), whereas the positional mechanism in the number task barely generalizes at all. 

\subsection{Generalization on LLM}

Finally, we performed experiments on frontier LLMs (GPT 5.4, GPT 5.5, Claude Sonnet 3.7) with simplified versions of the letter and number tasks, using sequences of different lengths. Specifically, LLMs were prompted to solve a 1-hop version of each task, with the last token as the query and the remaining tokens as the target window; this 1-hop simplification serves as a proxy for the \emph{Index} and \emph{Retrieval} functions.  Each prompt included instructions in natural language to solve the task, along with few-shot examples (see experimental details in Appendix \ref{app:generalization_llm}). As shown in Figure \ref{fig:discrepancy_and_generalization}C and D, accuracy on the letter task is consistently higher than on the number task. Furthermore, accuracy on the number task decreases rapidly: at sequence length 32, it drops below 0.5 across all tested models, and below 0.1 at sequence length 100. In contrast, the letter task achieves accuracy above 0.88 at length 32 and above 0.65 at length 100 for all models. These results are consistent, once more, with Theorem \ref{teo:discrepancy} and the prediction that performance on the letter task generalizes to longer sequences much better than performance on the number task, suggesting that despite being established in a controlled setting, it still captures phenomena that is relevant to large general-purpose modern LLMs. 

\section{Limitations and Future Work}

{Our analysis uses a controlled single-head-per-layer setting, which reduces within-layer interactions and makes mechanisms easier to attribute. This is a limitation relative to real-world multi-head models, where computations may be distributed across heads, but the transfer of several predictions to real-world models suggests that the simplified setting still captures relevant mechanisms.
Our finite vocabulary also limits extrapolation tests within our controlled setting: in the number task, longer positional hops eventually exhaust the available integer symbols, and in the letter task, longer hop chains exhaust the available letters. Extending the tasks to larger or compositional vocabularies would allow a cleaner separation between sequence length, hop length, and vocabulary size.
Finally, our results point to several future directions. First, models sometimes learn shortcuts that perform more than 1-hop per layer, potentially improving hop generalization, but likely at the cost of reducing robustness to longer inputs. Second, the stronger extrapolation of symbolic mechanisms motivates training schemes that encourage symbolic solutions when both symbolic and positional strategies are available. More broadly, monitoring positional-symbolic scores during the training of larger language models could reveal whether the same mechanisms emerge in practical-scale reasoning tasks, while discrepancy may provide quantitative predictions of when length generalization breaks.}

\newpage

\bibliographystyle{plainnat}
\bibliography{references}

@article{yun2019transformers,
  title={Are transformers universal approximators of sequence-to-sequence functions?},
  author={Yun, Chulhee and Bhojanapalli, Srinadh and Rawat, Ankit Singh and Reddi, Sashank J and Kumar, Sanjiv},
  journal={arXiv preprint arXiv:1912.10077},
  year={2019}
}

@inproceedings{weiss2021thinking,
  title={Thinking like transformers},
  author={Weiss, Gail and Goldberg, Yoav and Yahav, Eran},
  booktitle={International Conference on Machine Learning},
  pages={11080--11090},
  year={2021},
  organization={PMLR}
}

@article{yang2024counting,
  title={Counting like transformers: Compiling temporal counting logic into softmax transformers},
  author={Yang, Andy and Chiang, David},
  journal={arXiv preprint arXiv:2404.04393},
  year={2024}
}

@inproceedings{zhou2023algorithms,
  title={What algorithms can transformers learn? a study in length generalization},
  author={Zhou, Hattie and Bradley, Arwen and Littwin, Etai and Razin, Noam and Saremi, Omid and Susskind, Joshua M and Bengio, Samy and Nakkiran, Preetum},
  booktitle={The Twelfth International Conference on Learning Representations},
  year={2023}
}

@article{kozachinskiy2025strassen,
  title={Strassen Attention, Split VC Dimension and Compositionality in Transformers},
  author={Kozachinskiy, Alexander and Urrutia, Felipe and Jimenez, Hector and Steifer, Tomasz and Pizarro, Germ{\'a}n and Fuentes, Mat{\'\i}as and Meza, Francisco and Calderon, Cristian B and Rojas, Crist{\'o}bal},
  journal={arXiv preprint arXiv:2501.19215},
  year={2025}
}

@article{merrill2022saturated,
  title={Saturated transformers are constant-depth threshold circuits},
  author={Merrill, William and Sabharwal, Ashish and Smith, Noah A},
  journal={Transactions of the Association for Computational Linguistics},
  volume={10},
  pages={843--856},
  year={2022}
}

@article{chu2025llm,
  title={Llm agents for education: Advances and applications},
  author={Chu, Zhendong and Wang, Shen and Xie, Jian and Zhu, Tinghui and Yan, Yibo and Ye, Jinheng and Zhong, Aoxiao and Hu, Xuming and Liang, Jing and Yu, Philip S and others},
  journal={arXiv preprint arXiv:2503.11733},
  year={2025}
}

@article{xu2025towards,
  title={Towards large reasoning models: A survey of reinforced reasoning with large language models},
  author={Xu, Fengli and Hao, Qianyue and Zong, Zefang and Wang, Jingwei and Zhang, Yunke and Wang, Jingyi and Lan, Xiaochong and Gong, Jiahui and Ouyang, Tianjian and Meng, Fanjin and others},
  journal={arXiv preprint arXiv:2501.09686},
  year={2025}
}

@article{welbl2018constructing,
  title={Constructing datasets for multi-hop reading comprehension across documents},
  author={Welbl, Johannes and Stenetorp, Pontus and Riedel, Sebastian},
  journal={Transactions of the Association for Computational Linguistics},
  volume={6},
  pages={287--302},
  year={2018},
  publisher={MIT Press One Rogers Street, Cambridge, MA 02142-1209, USA journals-info~…}
}

@article{ho2020constructing,
  title={Constructing a multi-hop qa dataset for comprehensive evaluation of reasoning steps},
  author={Ho, Xanh and Nguyen, Anh-Khoa Duong and Sugawara, Saku and Aizawa, Akiko},
  journal={arXiv preprint arXiv:2011.01060},
  year={2020}
}

@article{trivedi2020multihop,
  title={Is multihop QA in DiRe condition? Measuring and reducing disconnected reasoning},
  author={Trivedi, Harsh and Balasubramanian, Niranjan and Khot, Tushar and Sabharwal, Ashish},
  journal={arXiv preprint arXiv:2005.00789},
  year={2020}
}

@article{zhong2023mquake,
  title={Mquake: Assessing knowledge editing in language models via multi-hop questions},
  author={Zhong, Zexuan and Wu, Zhengxuan and Manning, Christopher D and Potts, Christopher and Chen, Danqi},
  journal={arXiv preprint arXiv:2305.14795},
  year={2023}
}

@article{zhu2024fanoutqa,
  title={Fanoutqa: Multi-hop, multi-document question answering for large language models},
  author={Zhu, Andrew and Hwang, Alyssa and Dugan, Liam and Callison-Burch, Chris},
  journal={arXiv preprint arXiv:2402.14116},
  year={2024}
}

@misc{sharkey2025openproblemsmechanisticinterpretability,
      title={Open Problems in Mechanistic Interpretability}, 
      author={Lee Sharkey and Bilal Chughtai and Joshua Batson and Jack Lindsey and Jeff Wu and Lucius Bushnaq and Nicholas Goldowsky-Dill and Stefan Heimersheim and Alejandro Ortega and Joseph Bloom and Stella Biderman and Adria Garriga-Alonso and Arthur Conmy and Neel Nanda and Jessica Rumbelow and Martin Wattenberg and Nandi Schoots and Joseph Miller and Eric J. Michaud and Stephen Casper and Max Tegmark and William Saunders and David Bau and Eric Todd and Atticus Geiger and Mor Geva and Jesse Hoogland and Daniel Murfet and Tom McGrath},
      year={2025},
      eprint={2501.16496},
      archivePrefix={arXiv},
      primaryClass={cs.LG},
      url={https://arxiv.org/abs/2501.16496}, 
}

@article{schnitzler2024morehopqa,
  title={Morehopqa: More than multi-hop reasoning},
  author={Schnitzler, Julian and Ho, Xanh and Huang, Jiahao and Boudin, Florian and Sugawara, Saku and Aizawa, Akiko},
  journal={arXiv preprint arXiv:2406.13397},
  year={2024}
}

@article{wu2024mrke,
  title={MRKE: The multi-hop reasoning evaluation of LLMs by knowledge edition},
  author={Wu, Jian and Yang, Linyi and Okumura, Manabu and Zhang, Yue},
  journal={CoRR},
  year={2024}
}

@inproceedings{yang2018hotpotqa,
  title={HotpotQA: A dataset for diverse, explainable multi-hop question answering},
  author={Yang, Zhilin and Qi, Peng and Zhang, Saizheng and Bengio, Yoshua and Cohen, William and Salakhutdinov, Ruslan and Manning, Christopher D},
  booktitle={Proceedings of the 2018 conference on empirical methods in natural language processing},
  pages={2369--2380},
  year={2018}
}

@article{cambon2023early,
  title={Early LLM-based tools for enterprise information workers likely provide meaningful boosts to productivity},
  author={Cambon, Alexia and Hecht, Brent and Edelman, Ben and Ngwe, Donald and Jaffe, Sonia and Heger, Amy and Vorvoreanu, Mihaela and Peng, Sida and Hofman, Jake and Farach, Alex and others},
  journal={Microsoft Research. MSR-TR-2023},
  volume={43},
  year={2023}
}

@article{ameisen2025circuit,
  title={Circuit tracing: Revealing computational graphs in language models},
  author={Ameisen, Emmanuel and Lindsey, Jack and Pearce, Adam and Gurnee, Wes and Turner, Nicholas L and Chen, Brian and Citro, Craig and Abrahams, David and Carter, Shan and Hosmer, Basil and others},
  journal={Transformer Circuits Thread},
  volume={6},
  year={2025}
}

@article{barbero2024round,
	title={Round and round we go! what makes rotary positional encodings useful?},
	author={Barbero, Federico and Vitvitskyi, Alex and Perivolaropoulos, Christos and Pascanu, Razvan and Veli{\v{c}}kovi{\'c}, Petar},
	journal={arXiv preprint arXiv:2410.06205},
	year={2024}
}

@article{xiong2023effective,
	title={Effective long-context scaling of foundation models},
	author={Xiong, Wenhan and Liu, Jingyu and Molybog, Igor and Zhang, Hejia and Bhargava, Prajjwal and Hou, Rui and Martin, Louis and Rungta, Rashi and Sankararaman, Karthik Abinav and Oguz, Barlas and others},
	journal={arXiv preprint arXiv:2309.16039},
	year={2023}
}

@article{ding2024longrope,
	title={Longrope: Extending llm context window beyond 2 million tokens},
	author={Ding, Yiran and Zhang, Li Lyna and Zhang, Chengruidong and Xu, Yuanyuan and Shang, Ning and Xu, Jiahang and Yang, Fan and Yang, Mao},
	journal={arXiv preprint arXiv:2402.13753},
	year={2024}
}

@article{chen2023extending,
	title={Extending context window of large language models via positional interpolation},
	author={Chen, Shouyuan and Wong, Sherman and Chen, Liangjian and Tian, Yuandong},
	journal={arXiv preprint arXiv:2306.15595},
	year={2023}
}

@article{hahn2020theoretical,
	title={Theoretical limitations of self-attention in neural sequence models},
	author={Hahn, Michael},
	journal={Transactions of the Association for Computational Linguistics},
	volume={8},
	pages={156--171},
	year={2020},
	publisher={MIT Press One Rogers Street, Cambridge, MA 02142-1209, USA journals-info~…}
}

@article{peng2024limitations,
	title={On limitations of the transformer architecture},
	author={Peng, Binghui and Narayanan, Srini and Papadimitriou, Christos},
	journal={arXiv preprint arXiv:2402.08164},
	year={2024}
}

@article{bhattamishra2024separations,
	title={{Separations in the Representational Capabilities of Transformers and Recurrent Architectures}},
	author={Bhattamishra, Satwik and Hahn, Michael and Blunsom, Phil and Kanade, Varun},
	journal={arXiv preprint arXiv:2406.09347},
	year={2024}
}

@article{strobl2024formal,
	title={What formal languages can transformers express? a survey},
	author={Strobl, Lena and Merrill, William and Weiss, Gail and Chiang, David and Angluin, Dana},
	journal={Transactions of the Association for Computational Linguistics},
	volume={12},
	pages={543--561},
	year={2024},
	publisher={MIT Press 255 Main Street, 9th Floor, Cambridge, Massachusetts 02142, USA~…}
}

@article{su2024roformer,
	title={Roformer: Enhanced transformer with rotary position embedding},
	author={Su, Jianlin and Ahmed, Murtadha and Lu, Yu and Pan, Shengfeng and Bo, Wen and Liu, Yunfeng},
	journal={Neurocomputing},
	volume={568},
	pages={127063},
	year={2024},
	publisher={Elsevier}
}

@article{pasten2025continuity,
	title={Continuity and Isolation Lead to Doubts or Dilemmas in Large Language Models},
	author={Pasten, Hector and Urrutia, Felipe and Jimenez, Hector and Calderon, Cristian B and Rojas, Crist{\'o}bal and Kozachinskiy, Alexander},
	journal={arXiv preprint arXiv:2505.10606},
	year={2025}
}

@article{urrutia2025decoupling,
  title={Decoupling Positional and Symbolic Attention Behavior in Transformers},
  author={Urrutia, Felipe and Salas, Jorge and Kozachinskiy, Alexander and Calderon, Cristian Buc and Pasten, Hector and Rojas, Cristobal},
  journal={arXiv preprint arXiv:2511.11579},
  year={2025}
}

@article{wu2025transformers,
  title={How Do Transformers Learn Variable Binding in Symbolic Programs?},
  author={Wu, Yiwei and Geiger, Atticus and Milli{\`e}re, Rapha{\"e}l},
  journal={arXiv preprint arXiv:2505.20896},
  year={2025}
}

@article{conmy2023towards,
  title={Towards automated circuit discovery for mechanistic interpretability},
  author={Conmy, Arthur and Mavor-Parker, Augustine and Lynch, Aengus and Heimersheim, Stefan and Garriga-Alonso, Adri{\`a}},
  journal={Advances in Neural Information Processing Systems},
  volume={36},
  pages={16318--16352},
  year={2023}
}

@article{elhage2021mathematical,
   title={A Mathematical Framework for Transformer Circuits},
   author={Elhage, Nelson and Nanda, Neel and Olsson, Catherine and Henighan, Tom and Joseph, Nicholas and Mann, Ben and Askell, Amanda and Bai, Yuntao and Chen, Anna and Conerly, Tom and DasSarma, Nova and Drain, Dawn and Ganguli, Deep and Hatfield-Dodds, Zac and Hernandez, Danny and Jones, Andy and Kernion, Jackson and Lovitt, Liane and Ndousse, Kamal and Amodei, Dario and Brown, Tom and Clark, Jack and Kaplan, Jared and McCandlish, Sam and Olah, Chris},
   year={2021},
   journal={Transformer Circuits Thread},
   note={https://transformer-circuits.pub/2021/framework/index.html}
}

@inproceedings{hegde2024effectiveness,
  title={Effectiveness of Sparse Autoencoder for understanding and removing gender bias in LLMs},
  author={Hegde, Praveen},
  booktitle={NeurIPS 2024 Workshop on Scientific Methods for Understanding Deep Learning}
}

@article{bereska2024mechanistic,
  title={Mechanistic interpretability for AI safety--a review},
  author={Bereska, Leonard and Gavves, Efstratios},
  journal={arXiv preprint arXiv:2404.14082},
  year={2024}
}

@article{cammarata2025painting,
  author = {Cammarata, Nick and Bissell, Mark and Nguyen, Nam and Deng, Myra and Ho, Eric and Gorton, Liv and Loeffler, Max and Balsam, Daniel},
  title = {Painting with concepts using diffusion model latents},
  journal = {Goodfire},
  year = {2025},
  note = {https://paint.goodfire.ai/}
}

@article{lindsey2025landscape,
   author={Lindsey, Jack and Ameisen, Emmanuel and Nanda, Neel and Shabalin, Stepan and Piotrowski, Mateusz and McGrath, Tom and Hanna, Michael and Lewis, Owen and Tigges, Curt and Merullo, Jack and Watts, Connor and Paulo, Gonçalo and Batson, Joshua and Gorton, Liv and Simon, Elana and Loeffler, Max and McDougall, Callum and Lin, Johnny},
   title={The Circuits Research Landscape: Results and Perspectives},
   journal={Neuronpedia},
   year={2025},
   url={https://neuronpedia.org/graph/info}
}

@article{liao2021hop,
  title={To hop or not, that is the question: Towards effective multi-hop reasoning over knowledge graphs},
  author={Liao, Jinzhi and Zhao, Xiang and Tang, Jiuyang and Zeng, Weixin and Tan, Zhen},
  journal={World Wide Web},
  volume={24},
  number={5},
  pages={1837--1856},
  year={2021},
  publisher={Springer}
}

@inproceedings{wu2025mmqa,
  title={MMQA: Evaluating LLMs with multi-table multi-hop complex questions},
  author={Wu, Jian and Yang, Linyi and Li, Dongyuan and Ji, Yuliang and Okumura, Manabu and Zhang, Yue},
  booktitle={The Thirteenth International Conference on Learning Representations},
  pages={1},
  year={2025}
}

@article{zhao2024benchmarking,
  title={Benchmarking multi-image understanding in vision and language models: Perception, knowledge, reasoning, and multi-hop reasoning},
  author={Zhao, Bingchen and Zong, Yongshuo and Zhang, Letian and Hospedales, Timothy},
  journal={arXiv preprint arXiv:2406.12742},
  year={2024}
}

@inproceedings{wan2021gaussianpath,
  title={Gaussianpath: A bayesian multi-hop reasoning framework for knowledge graph reasoning},
  author={Wan, Guojia and Du, Bo},
  booktitle={Proceedings of the AAAI conference on artificial intelligence},
  volume={35},
  number={5},
  pages={4393--4401},
  year={2021}
}

@inproceedings{shi2022stepgame,
  title={Stepgame: A new benchmark for robust multi-hop spatial reasoning in texts},
  author={Shi, Zhengxiang and Zhang, Qiang and Lipani, Aldo},
  booktitle={Proceedings of the AAAI conference on artificial intelligence},
  volume={36},
  number={10},
  pages={11321--11329},
  year={2022}
}

@article{guo2025llms,
  title={How Do LLMs Perform Two-Hop Reasoning in Context?},
  author={Guo, Tianyu and Zhu, Hanlin and Zhang, Ruiqi and Jiao, Jiantao and Mei, Song and Jordan, Michael I and Russell, Stuart},
  journal={arXiv preprint arXiv:2502.13913},
  year={2025}
}

@inproceedings{hou-etal-2023-towards,
    title = "Towards a Mechanistic Interpretation of Multi-Step Reasoning Capabilities of Language Models",
    author = "Hou, Yifan  and
      Li, Jiaoda  and
      Fei, Yu  and
      Stolfo, Alessandro  and
      Zhou, Wangchunshu  and
      Zeng, Guangtao  and
      Bosselut, Antoine  and
      Sachan, Mrinmaya",
    editor = "Bouamor, Houda  and
      Pino, Juan  and
      Bali, Kalika",
    booktitle = "Proceedings of the 2023 Conference on Empirical Methods in Natural Language Processing",
    month = dec,
    year = "2023",
    address = "Singapore",
    publisher = "Association for Computational Linguistics",
    url = "https://aclanthology.org/2023.emnlp-main.299/",
    doi = "10.18653/v1/2023.emnlp-main.299",
    pages = "4902--4919"
}

@article{meng2022locating,
  title={Locating and editing factual associations in gpt},
  author={Meng, Kevin and Bau, David and Andonian, Alex and Belinkov, Yonatan},
  journal={Advances in neural information processing systems},
  volume={35},
  pages={17359--17372},
  year={2022}
}

@inproceedings{langedijk-etal-2024-decoderlens,
    title = "{D}ecoder{L}ens: Layerwise Interpretation of Encoder-Decoder Transformers",
    author = "Langedijk, Anna  and
      Mohebbi, Hosein  and
      Sarti, Gabriele  and
      Zuidema, Willem  and
      Jumelet, Jaap",
    editor = "Duh, Kevin  and
      Gomez, Helena  and
      Bethard, Steven",
    booktitle = "Findings of the Association for Computational Linguistics: NAACL 2024",
    month = jun,
    year = "2024",
    address = "Mexico City, Mexico",
    publisher = "Association for Computational Linguistics",
    url = "https://aclanthology.org/2024.findings-naacl.296/",
    doi = "10.18653/v1/2024.findings-naacl.296",
    pages = "4764--4780"
}

@inproceedings{samaran-etal-2021-attending,
    title = "Attending Self-Attention: A Case Study of Visually Grounded Supervision in Vision-and-Language Transformers",
    author = "Samaran, Jules  and
      Garcia, Noa  and
      Otani, Mayu  and
      Chu, Chenhui  and
      Nakashima, Yuta",
    editor = "Kabbara, Jad  and
      Lin, Haitao  and
      Paullada, Amandalynne  and
      Vamvas, Jannis",
    booktitle = "Proceedings of the 59th Annual Meeting of the Association for Computational Linguistics and the 11th International Joint Conference on Natural Language Processing: Student Research Workshop",
    month = aug,
    year = "2021",
    address = "Online",
    publisher = "Association for Computational Linguistics",
    url = "https://aclanthology.org/2021.acl-srw.8/",
    doi = "10.18653/v1/2021.acl-srw.8",
    pages = "81--86"
}

@article{pan2026opening,
  title={Opening the Black Box: A Survey on the Mechanisms of Multi-Step Reasoning in Large Language Models},
  author={Pan, Liangming and Liang, Jason and Ye, Jiaran and Yang, Minglai and Lu, Xinyuan and Zhu, Fengbin},
  year={2026},
  publisher={Preprints}
}

@article{abnar2020quantifying,
  title={Quantifying attention flow in transformers},
  author={Abnar, Samira and Zuidema, Willem},
  journal={arXiv preprint arXiv:2005.00928},
  year={2020}
}

@article{gur2025mixing,
  title={Mixing Mechanisms: How Language Models Retrieve Bound Entities In-Context},
  author={Gur-Arieh, Yoav and Geva, Mor and Geiger, Atticus},
  journal={arXiv preprint arXiv:2510.06182},
  year={2025}
}

@misc{mesh-transformer-jax,
  author = {Wang, Ben},
  title = {{Mesh-Transformer-JAX: Model-Parallel Implementation of Transformer Language Model with JAX}},
  howpublished = {\url{https://github.com/kingoflolz/mesh-transformer-jax}},
  year = 2021,
  month = May
}

%%%%%%%%%%%%%%%%%%%%%%%%%%%%%%%%%%%%%%%%%%%%%%%%%%%%%%%%%%%%

\appendix
\section{Related Work}
\label{sec:relatedwork}

Given the incremental performance of Large Reasoning Models \citep{xu2025towards}, several studies have focused on gaining a deeper understanding of the mechanisms underlying the solution to simpler multi-hop reasoning tasks. The goal of these studies is to strike a good balance between task complexity, model complexity and interpretability in order to provide insights that can help us understand how Transformer-based models are solving more complex tasks at a practical scale.

\paragraph{Multi-hop Reasoning.} Multi-hop reasoning tasks require chaining multiple intermediate computations before being able to infer a final answer \citep{yang2018hotpotqa, welbl2018constructing}. To evaluate the reasoning abilities of LLMs, several multi-hop question answering benchmarks have been developed over the past years \citep{ho2020constructing, trivedi2020multihop, zhong2023mquake,wu2024mrke, zhu2024fanoutqa,schnitzler2024morehopqa,shi2022stepgame}. The modality or type of data upon which the multi-hop task is being implemented can vary. For example, \citet{zhao2024benchmarking} investigate multi-hop reasoning over images, and find that such tasks are more difficult compared with their text counterpart. In turn, \citet{wan2021gaussianpath} focus on knowledge graph answering. They proposed a Bayes-based RL multi-hop reasoning framework that can explicitly quantify the uncertainty over reasoning sequences. More recently, \citet{wu2025mmqa} examined the models' reasoning abilities over multi-table question answering. Notably, they showed that modern LLMs still struggle in solving these kind of tasks, specially for larger hops structure and input lengths. Whereas these benchmarks can shed light on abilities, they are often too complex to be interpretable. Such limitations sparked research on these multi-hop tasks \citep{hou-etal-2023-towards, pan2026opening, guo2025llms}, but in interpretable settings. Here, we go a step further, and focus not only on how Transformer-based trained models solve these tasks, but also on how learning the properties necessary to solve these tasks give us crucial insights to potentially improve the models \citep{wu2025transformers}.

\paragraph{Methods for circuits and mechanisms interpretability.}
Typically, interpretability techniques are applied over attention layers and internal representations~\citep{samaran-etal-2021-attending, abnar2020quantifying, elhage2021mathematical,langedijk-etal-2024-decoderlens}. The main goal of these methods is to understand the role of a given hidden layer in the distinct mechanisms that models implement. For example, Linear Probing focuses on predicting a model's output from an intermediate layer's hidden state by using simple linear models \citep{wu2025transformers}. Similarly, cross layer transcoders are interconnected encoders used to predict a layers output from all previous activations, clarifying the impact of a particular layer on the following ones \citep{ameisen2025circuit}. In turn, activation patching implements local interventions to hidden vectors while tracking changes in the output \citep{meng2022locating,gur2025mixing}. Although useful, these methods have three important limitations: \textit{(i)} high computation and training times, \textit{(ii)} vague interpretation of features and \textit{(iii)} locality of interventions. In this work we opt for a novel method that is efficiently computable (i.e., does not require training another model), allows to interpret attention mechanisms in a task-agnostic way (not requiring causal models) and summarizes the impact of several query positions (i.e., global rather than local interventions).

\paragraph{Positional and Symbolic mechanisms.}

\citet{barbero2024round} observed two distinct mechanisms in RoPE-based Transformers. When models require implementing symbolic mechanism to solve a task, attention heads tend to use lower RoPE frequencies. In contrast, when the task requires the implementation of positional mechanisms, attention heads prefer the use of mid to high frequencies. \citet{gur2025mixing} observed that symbolic (or reflexive) and positional mechanisms underlay the solution to a relational question answering task. \citet{urrutia2025decoupling} further revealed a tight link between positional and symbolic attention mechanisms and how queries and keys vector interact in the model. Furthermore, they showed that symbolic mechanisms (using lower frequencies) are crucial for better performance in longer contexts. Such results are in line with previous works showing that increasing the RoPE base frequency improves longer contexts performance \citep{xiong2023effective,chen2023extending,ding2024longrope}. 

\paragraph{Expressivity results.} A natural approach to understanding the behavior of Transformers on structured reasoning tasks is through the lens of expressivity and algorithmic mechanisms. Early results established that Transformers are universal approximators of sequence-to-sequence functions under sufficient capacity assumptions \citep{yun2019transformers}. Subsequent work showed that realistic architectural constraints such as bounded depth, restricted precision, or specific attention mechanisms induce important computational limitations \citep{hahn2020theoretical,merrill2022saturated,strobl2024formal}.

Most prior theoretical analyses have focused on formal language recognition or asymptotic computational power. For instance, hardmax Transformers with constant depth cannot recognize languages such as PARITY or Dyck-1 \citep{hahn2020theoretical}, while saturated Transformers can be characterized through constant-depth threshold circuits \citep{merrill2022saturated}. More recently, communication-complexity and VC-dimension-based techniques techniques have been used to derive lower bounds for softmax Transformers on compositional and matching tasks \citep{peng2024limitations,bhattamishra2024separations,kozachinskiy2025strassen}.

Closer to our setting, several works have studied the ability Transformers to implement programming languages. \cite{weiss2021thinking} analyzed how attention layers emulate primitives of RASP \cite{weiss2021thinking}, while \cite{yang2024counting} showed that Transformers can implement temporal counting mechanisms. \cite{zhou2023algorithms} suggested that Transformers may learn solutions that generalize across input lengths when the task admits a short RASP-L program valid for arbitrary sequence lengths.

Our work is closely related in spirit to the previously described studies, but differs in a key aspect. Most (if not all) of the prior positive results in the expressivity literature are established for models equipped with idealized or ad hoc components, such as hard attention or specially designed positional encodings. While highly valuable, these theoretical constructions avoid tackling the issue of what mechanisms Transformers actually learn to implement. Our work aims to close this gap. First, we characterize the concrete mechanisms that emerge during learning in RoPE-based models, through the distinction between positional and symbolic attention heads. Second, we connect geometric properties of our RoPE-based constructions to measurable failures in real models length generalization.

\section{Multi-hop task formalization}
\label{app:tasks}

Let $m$ be the number of integers. To formally define the tasks, we consider a finite set of alphabetic symbols $\textsc{Alph}$, a set of integers $\textsc{Int} = [m]$, a target window length $n_1>0$, a context length $n_2>0$ and hops length $\textsc{HL}>0$.
\subsection{Number Task}
We consider inputs $\posinp:= s_1s_2...s_{n_1}j_1j_2...j_{n_2}$ such that $s_i$ is a symbol from $\textsc{Alph}$ for every $i=1,...,n_1$ and $j_i$ is an integer from $\textsc{Int}$ for every $i=1,...,n_2$. For a given input $\posinp$, we assume there exists a sequence of integers $j_{k_1}j_{k_2}...j_{k_{\textsc{HL}}}\subseteq j_{n_2}j_{{n_2}-1}...j_{1}$ such that:
\begin{enumerate}
    \item $j_{k_1} = j_{n_2}$.
    
    \item $n_1 - j_{{k_{\textsc{HL}}}} + 1\in\{1,2,...,n_1\}$.

    \item For every $i=1,..,\textsc{HL}-1$ then $k_i - k_{i+1} = j_{k_i}$.
\end{enumerate}

The solution of the number task is a function $\fpos$ such that for every $\posinp$ then $\fpos(\posinp) = s_{{n_2}-j_{k_{\textsc{HL}}}+1}$ (see Figure \ref{fig:main_hop} left). Intuitively, an input $\posinp$ includes a sequence of integers where each number defines the position of the following hop in the sequence. We assume the first element of this sequence is the last integer in $\posinp$, while the last element points to the relative position of a symbol $s^{*}\in \{s_1,...,s_{n_1}\}$. We call this task {\em{number-based multi-hop}} since the relevant information to solve the task is the token number indicating the hop sequence, and not the letters in the sequence.

\noindent\textbf{Implementation.} Our dataset consisted of 480,000 sequences, each containing 17 tokens: a target window and a pretarget window of 8 tokens each, followed by a single query token. The \textsc{Alph} vocabulary contained 120 tokens, comprising the 26 single letters $a, b, \dots, z$ together with 94 character bigram tokens ranging from $aa$ to $dp$ in lexicographic order. The \textsc{Int} vocabulary contained the 16 integer tokens \{1, 2, \dots, 16\}. The dataset therefore contained 120 + 16 = 136 unique tokens in total. Sequences were drawn in equal proportion from 1-, 2-, 3-, and 4-hop instances, and within each hop class we balanced both the value of the solution token and its position in the target window. The dataset was partitioned into training, validation, and test sets in a 90 / 0.5 / 9.5 percentage split (432,000 / 2,400 / 45,600 sequences).

\subsection{Letter task}
We consider inputs $\syminp:= (s_1, w_1)(s_2,w_2)...(s_{n_1},w_{n_1})(x_1,y_1)(x_2,y_2)...(x_{n_2}, y_{n_2})$ where $s_i, x_i$ and $y_i$ are symbols from $\textsc{Alph}$ for every $i=1,...,\max\{n_1,n_2\}$ while $w_i\in\textsc{Int}$ for $i=1,...,n_1$. For an input $\syminp$ we assume there exist a sequence $(x_{k_1}, y_{k_1})(x_{k_2}, y_{k_2})...(x_{k_{\textsc{HL}}}, y_{k_{\textsc{HL}}}) \subseteq (x_{n_2}, y_{n_2})...(x_1, y_1) $ such that:

\begin{enumerate}
    \item $(x_{k_1},y_{k_1}) = (x_{n_2},y_{n_2})$.
    \item There exists $i^{*}\in\{1,...,n_1\}$ such that $y_{k_H}=s_{i^{*}}$.
    \item For every $i=1,...,\textsc{HL}-1$ then $y_{k_i} = x_{k_{i+1}}$.
\end{enumerate}

The solution for this task is a function $\fsym$ such that for every input $\syminp$ then $\fsym(\syminp) = (s_{i^*}, w_{i^{*}})$ (see Figure \ref{fig:main_hop} middle). Intuitively, a solution model for $\fsym$ should follow the sequence of {\em letter hops} given by $(x_{k_1}, y_{k_1})(x_{k_2}, y_{k_2})...(x_{k_{\textsc{HL}}}, y_{k_{\textsc{HL}}})$ and retrieve the token $(s_{i^*}, w_{i^{*}})$ pointed by the second element of the tuple $(x_{k_{\textsc{HL}}}, y_{k_{\textsc{HL}}})$. We name this task \textit{letter-based multi-hop} given that letters define the hop sequence to solve the task.

\noindent\textbf{Implementation.} The letter task dataset mirrored the number task dataset in structure: 480,000 sequences of 17 tokens each (an 8-token target window, an 8-token pretarget window, and a single query token). The \textsc{Alph} vocabulary contained the 8 letters a,b,…,ha, b, \dots, h, and the \textsc{Int} vocabulary the 8 integers \{1, 2, \dots, 8\}. Each target window token was a pair in \textsc{Alph} $\times$ \textsc{Int} and each pretarget-window token a pair in \textsc{Alph} $\times$ \textsc{Alph}, with each pair treated as a single atomic token in the model's vocabulary. This yields 64 + 64 = 128 unique tokens in total, comparable in size to the 136-token vocabulary of the number task. The same balancing procedure was applied: equal proportions of 1-, 2-, 3-, and 4-hop instances, with the value and position of the solution balanced within each hop class. The train/validation/test split was also identical to that of the number task.

\section{Transformer preliminaries, GPT-J Architecture, and training details}
\label{app:architecture}

\subsection{Transformer preliminaries}

\begin{definition}[Attention Head]
\label{def_layer}
    A $d$-dimensional decoder-only \emph{attention head} is a function $\Hcal\colon(\mathbb{R}^d)^* \to(\mathbb{R}^d)^*$ given by a continuous ``Logits function'' $L\colon (\mathbb{R}^d\times \mathbb{N})^2 \to \mathbb{R}$, a continuous ``Value function'' $\val\colon\mathbb{R}^d\to\mathbb{R}^d$, and a continuous ``activation function'' $F\colon\mathbb{R}^d\times\mathbb{R}^d \to\mathbb{R}^d$.
    Given an input sequence of vectors $\bar x = (x_1, \ldots, x_n)\in(\mathbb{R}^d)^n$, the head $H$ outputs a sequence of vectors $\bar y = (y_1, \ldots, y_n) = H(\bar x)$, computed as follows.     
    First, the ``attention weights'' are computed as $ 
w_j^i = \exp(\lambda_j^i)/\sum_{k\leq i}\exp(\lambda_k^i), \,\, \text{with} \,\, \lambda_j^i=L(x_i, i,x_j,j);$ then, a sequence $\bar a = (a_1, \ldots, a_n)$ of ``attention vectors'': ${a_i = \sum_{j=1}^i w_j^i\cdot \val(x_j)}$;
 and finally, one sets: $y_i = F(a_i, x_i)$, $i = 1,\ldots, n.$
\end{definition}

\noindent{\textbf{Rotary Positional Encoding (RoPE)}}. We assume an even hidden dimension $d\geq2$. For RoPE \cite{su2024roformer} we consider the set of angles $G = (g_k = \theta^{-2(k-1)/d} : k=1,...,d/2)$ where $g_1$ is the fastest rotation angle/frequency and $g_{d/2}$ is the slowest one. For a given angle $g_k$, we denote the rotation matrix $\rho(g_k)$ as 
$$\rho(g_k)=\left[\begin{matrix}
\cos(g_k) & -\sin(g_k) \\
\sin(g_k) & \cos(g_k) 
\end{matrix}\right].$$

To account for different rotation angles, we define a matrix $\mathbf{R}^{i}=\bigoplus_{k=1}^{d/2} \rho(g_{k})^{i} \in \mathbb{R}^{d\times d}$ as the concatenation of the $i$-th exponentiation of rotation matrices $\rho(g_k)$ in a block diagonal matrix. Since each matrix $\rho(g_k)$ is a rotation matrix , we get that $\rho(g_k)^{i} = \rho(i g_k)$. Finally, for a query vector $x_i$ at position $i$ and key vector $x_j$ at position $j$, we define the RoPE logits function as: $$L_{\text{RoPE}}(x_i,i,x_j,j) = (\mathbf{R}^{j}x_j)^{\top}(\mathbf{R}^{i}x_i) = x_{j}^{\top}\mathbf{R}^{i-j}x_{i}.$$

\subsection{GPT-J architecture and training details}

We train from scratch a decoder-only Transformer model based on GPT-J \cite{mesh-transformer-jax}. Our model is configured with 12 layers, each with a single attention head, and a hidden dimension of 128. We encode positional information using RoPE, apply GELU activations between layers, and a dropout rate of $0.1$. Unlike standard Causal Language Modeling which optimizes the joint probability over the entire sequence, we employ a \emph{last-token prediction} strategy. Let $\bar{x} = (x_1, \ldots, x_n)$ be a sequence of fixed length $n$, where $x_1, \ldots, x_{n-1}$ is the input context and $x_n$ represents the target label. We optimize the model parameters by minimizing the negative log-likelihood exclusively at the terminal step $n$. This is achieved by masking the gradient contribution from all preceding tokens. 
This is implemented by constructing a target vector $y$ where we apply the ignore index ($-100$) to all positions $t < n$, ensuring that only the final token $y_n = x_n$ contributes to the gradient. The objective function for a single sample is defined as:
\[
\mathcal{L}(\theta) = - \log P_\theta(x_n \mid x_1, \ldots, x_{n-1})
\]

\section{Positional and Symbolic Scores}
\label{app:scores}
Following \cite{urrutia2025decoupling}, we introduce the relevant definitions.

We start by providing formal definitions of what it means for a head to behave positionally or symbolically on a given input. Given an arbitrary head $\Hcal$ with logits function $L$ and an input $\bar{x}$, we define the logits produced when querying the final token $x_n$ as
$
L(\overline{x}) 
= \bigl( L(x_n, n, x_j, j) : j \leq n \bigr).$
Then, we define the corresponding attention weights as $D(\overline{x}) = \softmax(L(\overline{x}))
$. In the following, $S_i$ will denote the set of permutations of $[i]$. 

\begin{definition}[Positional and Symbolic attention]\label{def:act_pos_sym}
We say that a head $\Hcal$ acts \emph{\textbf{positionally}} on an input $\overline{x} = (x_1,x_2,...,x_n)$ at query $i\leq n$ if its logit sequence is \emph{{invariant}} under permutation of the key vectors. That is, if for all $\pi\in S_{i-1}$:
\[
L(x_i,i,x_{\pi(j)},j)=L(x_i,i,x_j,j) \qquad \forall j<i.
\]
%\[
% \bar{\lambda}^i(\pi(\bar{x})) = \bar{\lambda}^i(\bar{x}) \qquad\forall \pi\in S_{i-1}.  
%\]
On the other hand, we will say that $\Hcal$ acts \emph{\textbf{symbolically}} on $\overline{x} = (x_1,x_2,...,x_{n})\in(\mathbb{R}^{d})^n$ at query $i\leq n$ if its logit sequence is \emph{{equivariant}} under permutations of the key vectors. That is, if 
\[
L(x_i,i,x_j,\pi(j))=L(x_i,i,x_j,j) \qquad \forall \pi \in S_{i-1}, \quad \forall j<i.
\]
%\[
%\bar{\lambda}^i(\pi(\bar{x})) = \pi(\bar{\lambda}^i(\bar{x})) \qquad \forall \pi \in S_{i-1}.
%\]
\end{definition}

In other words, suppose we are querying on a given input from position $i$ to all positions $j\leq i$. A head $\Hcal$ is positional when $L(x_i,i,x_j,j)$ depends only on the position $j$, and not on the value of the key vector $x_j$. In contrast, $\Hcal$ will be symbolic when $L(x_i,i,x_j,j)$ only depends on the key-vector $x_j$ (the ``symbol''), regardless of the position $j$.

\paragraph{Two--dimensional representation.}
A simple type of permutation $\pi$ is a swap between block $i$ and block $j$. Restricting to such permutations is no loss of generality since every permutation can be decomposed in a composition of swaps. We represent the relevant attention mass by the vector
\[
v_{ij}(\overline{x}) = (D_i(\overline{x}),\, D_j(\overline{x})),
\]
and after the permutation by
\[
v_{ij}(\pi(\overline{x})) = (D_i(\pi(\overline{x})),\, D_j(\pi(\overline{x}))).
\]

\paragraph{Permutation weights.}
Not all swaps are equally informative: we weight each permutation $\pi$ according to the amount of mass it moves,
\[
\alpha(\pi) = \softmax \!\bigl(\bigl| d_i(\overline{x}) - d_j(\overline{x}) \bigr|/\tau\bigr),
\]
where $\tau > 0$ is a temperature parameter controlling how sharply the softmax focuses on high-mass swaps.

\paragraph{Positional and symbolic scores.}
We can now quantify how the head behaves with respect to swap permutations using cosine similarity\footnote{$\cos \text{sim}(a, b) \;=\; \frac{\langle a, b \rangle}{\|a\|\,\|b\|}$.}.  The \emph{positional score} measures how stable the logits remain under a permutation:
\[
s_\textsc{pos}(\overline{x}) 
= \sum_{\pi} \alpha(\pi)\,
\cos \text{sim}\Bigl(v_{ij}(\pi(\overline{x})),\, v_{ij}(\overline{x})\Bigr).
\]
The \emph{symbolic score} instead measures whether each logit moves exactly as the permutation prescribes:
\[
s_\textsc{sym}(\overline{x}) 
= \sum_{\pi} \alpha(\pi)\,
\cos \text{sim}\Bigl(v_{ij}(\pi(\overline{x})),\,v_{ji}(\overline{x})\Bigr).
\]

 Now, for a set of tuples of hidden vectors $S=\{\overline{x}_1,...,\overline{x}_{|S|}\}$ and attention head $H$, we define the positional score of $H$ over $S$ as the average value $$\spos(S,H) = \displaystyle\frac{1}{|S|}\displaystyle\sum_{\overline{x}_k \in S} \spos(\overline{x}_k, H).$$

Similarly, we define the symbolic score over a set of hidden vectors $S$ as the average value

$$\ssym(S,H) = \displaystyle\frac{1}{|S|}\displaystyle\sum_{\overline{x}_k\in S} \ssym(\overline{x}_k, H).$$

\section{Learning dynamics in all layers}
\label{app:full_results}

Figure \ref{fig:pos_sym_all_layers} illustrates the symbolic and positional scores across each layer for the models trained on the number and letter tasks, respectively. Additionally, we display the per-layer entropy, computed from the attention logits of the last token in the input sequence. These entropy values are subsequently subjected to min-max normalization at each layer. Figure~\ref{fig:gammas} shows the cumulative distribution of head purity using $\gamma \in \{0.1, 0.05\}$. Note that while smaller values are more restrictive, the shape of the curve remains consistent, aside from a small shift along the $x$-axis. 

\begin{figure}
    \centering
    \includegraphics[width=0.7\linewidth]{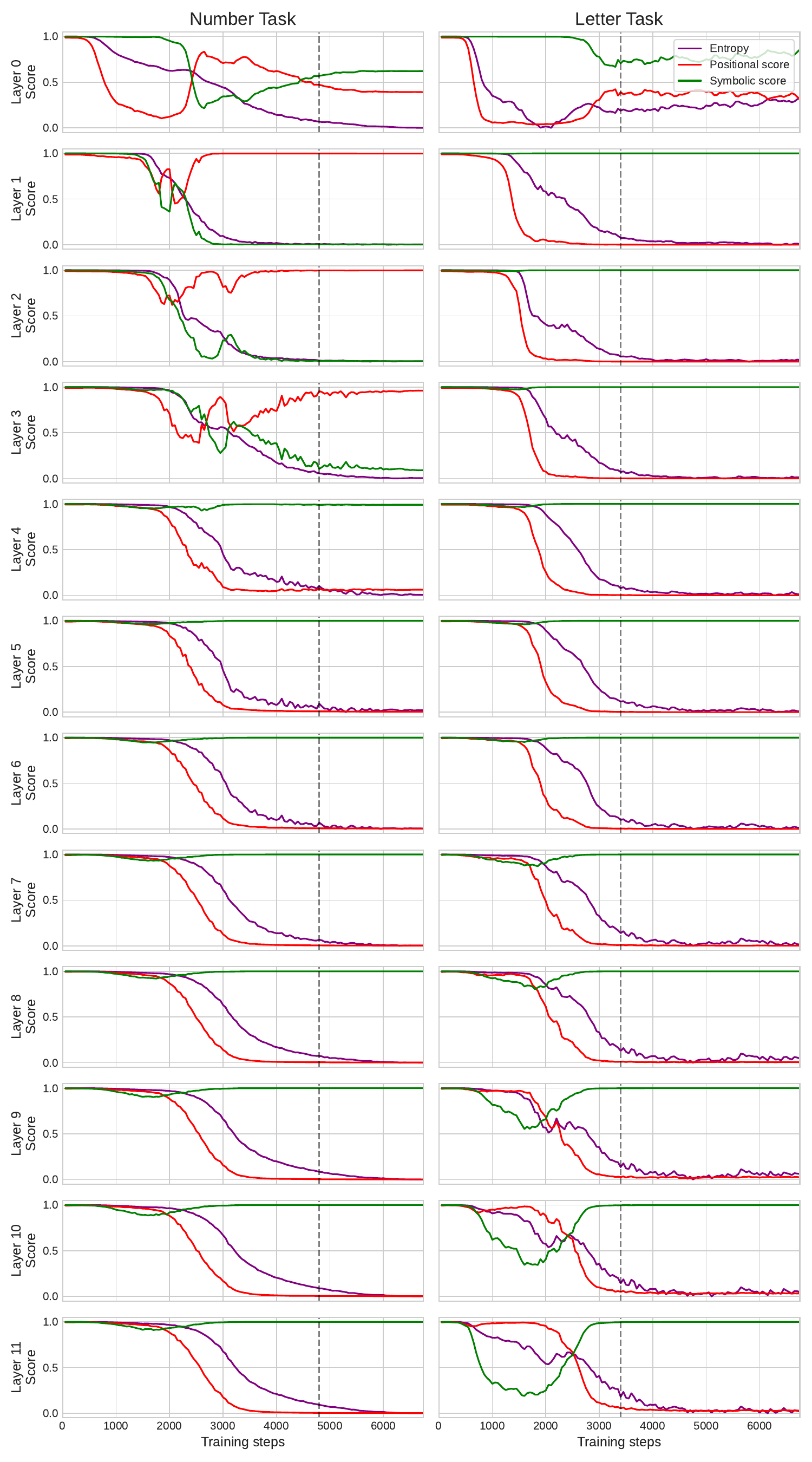}
    \caption{Positional and symbolic scores per layer for the models trained on the Number task (\textbf{left}) and the Letter task (\textbf{right}).}
    \label{fig:pos_sym_all_layers}
\end{figure}

\begin{figure}
    \centering
    \includegraphics[width=0.8\linewidth]{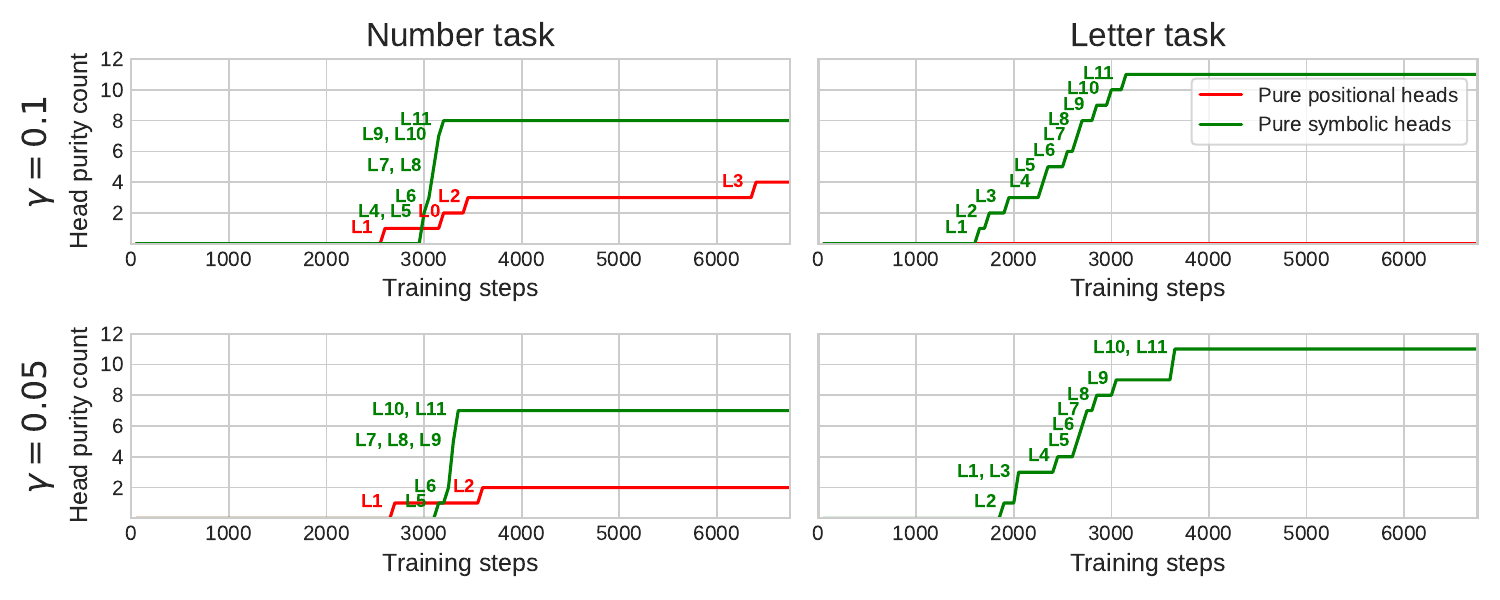}
    \caption{Dynamics of head purity counts for $\gamma \in \{0.1, 0.05\}$ (top and bottom, respectively)}
    \label{fig:gammas}
\end{figure}

\clearpage
\section{Formal definition of the basic functions} \label{sec:functions-pseudocode}

\subsection{Simple function definitions}
\begin{definition}[Relative Indexing function]
Let $m>0$ be the number of integers, let $\textsc{Alph}$ be a finite set of non-integer symbols, let $\textsc{Int} = \{1,...,m\}= [m]$, and define $\Sigma = \textsc{Alph} \cup \textsc{Int}$. For an input sequence $\bar{\sigma} = (\sigma_1, \sigma_2, \ldots, \sigma_n) \in \Sigma^n$ of length $n > m$, we define the function $\textcolor{black}{\mathrm{Index}}(\bar{\sigma}) \in \Sigma^n$ by specifying its output at position $j\leq n$.
If $\sigma_j = i \in \textsc{Int}$, then:
\[
\textcolor{tabred}{\mathrm{Index}}(\bar{\sigma})_j =
\begin{cases}
\sigma_{j-i} & \text{if } \sigma_{j-i} \in \textsc{Alph}, \\
\sigma_j & \text{otherwise}.
\end{cases}
\]
If instead $\sigma_j \in \textsc{Alph}$, then:
\[
\textcolor{gray}{\mathrm{Index}}(\bar{\sigma})_j = \sigma_j.
\]
\end{definition}

\begin{definition}[Retrieval function]\label{def:retrieval}
Let $m>0$ be the number of integers, let $\Sigma_{\mathrm{L}} = \textsc{Alph}$ be a set of non-integer symbols, and define $\Sigma_{\mathrm{R}} = \textsc{Alph} \cup Int$ where $Int = [m]$, let $\Sigma = \Sigma_{\mathrm{L}} \times \Sigma_{\mathrm{R}}$ be a finite set of bi-grams. Each token $\sigma \in \Sigma$ has the form $\sigma = \sigma^{\mathrm{L}} \sigma^{\mathrm{R}}$. A sequence $\bar{\sigma} = (\sigma_1, \sigma_2, \ldots, \sigma_n) \in \Sigma^n$ is called \emph{proper} if, for every $j \leq n$, whenever $\sigma_j^{\mathrm{R}} \in \textsc{Alph}$, there exists a position $k_j \leq j$ such that $\sigma_{k_j}^{\mathrm{L}} = \sigma_j^{\mathrm{R}}$, and if exists other position $k\leq j$ with the same property, then $\sigma_{k} = \sigma_{k_j}$. For a proper input $\bar{\sigma}$, we define the function $\textcolor{black}{\mathrm{Retrieval}}(\bar{\sigma}) \in \Sigma^n$ by specifying its output at position $j \leq n$. If $\sigma_j \in \textsc{Alph} \times \textsc{Alph}$, then:
\[
\textcolor{tabgreen}{\mathrm{Retrieval}}(\bar{\sigma})_j = \sigma_{k_j}
\]
If instead $\sigma_j \in \textsc{Alph} \times \textsc{Int}$, then:
\[
\textcolor{gray}{\mathrm{Retrieval}}(\bar{\sigma})_j = \sigma_j.
\]
\end{definition}

\begin{definition}[Reflexive function]
Let $\Sigma$ be a finite set. For any input sequence $\bar{\sigma} = (\sigma_1, \sigma_2, \ldots, \sigma_n) \in \Sigma^n$ of length $n$, we define the function $\textcolor{black}{\mathrm{Reflexive}}(\bar{\sigma}) \in \Sigma^n$ as follows:
\[
\textcolor{gray}{\mathrm{Reflexive}}(\bar{\sigma}) = \bar{\sigma}.
\]
\end{definition}

\begin{comment}
\begin{lstlisting}[
  caption={Pseudocode of Index, Retrieval, and Reflexive functions.}, 
  captionpos=b,
  label={fig:functions-pseudocode},
  language=Python,
  basicstyle=\ttfamily\small,
  keywordstyle=\color{blue},
  commentstyle=\color{gray},
  stringstyle=\color{teal},
  numberstyle=\tiny,
  frame=single,
  breaklines=false
]
def index(seq):
    out = seq.copy()
    for n in range(len(seq)):
        if is_number(seq[n]):
            i = seq[n]
            if is_letter(seq[n-i]):
                out[n] = seq[n-i]
    return out

def retrieval(seq):
    out = seq.copy()
    for n in range(len(seq)):
        if is_letter(seq[n]):
            for i in range(n):
                if seq[i][0] == seq[n][1]:
                    out[n] = seq[i]
                    break
    return out

def reflexive(seq): return seq
\end{lstlisting}\label{listing:functions}
\end{comment}

\subsection{Proofs of Theorem \ref{teo:mechanisms} and Theorem \ref{teo:discrepancy}}
\label{proof:teo}

\subsubsection{Selective Index function}

\begin{comment}
\begin{figure*}[ht!]
    \centering
    \includegraphics[width=1.0\linewidth]{figures/index_task.pdf}
    \caption{Illustration of the Selective  index function mechanism.}
    \label{fig:ill_index:_task}
\end{figure*}
\end{comment}

\paragraph{Requirements.} Let $m\geq 1$ be the number of integers, let $\textsc{Int} = \{1,...,m\} = [m]$, let $\textsc{Alph}$ be finite vocabulary of alphabetic symbols, let $E_\Sigma$ be a one-hot embedding over $\Sigma = \textsc{Alph} + \textsc{Int}$. Without loss of generality, set $E_\Sigma^{-1}(\sigma) = \sigma$ for $\sigma \in \textsc{Int}$. Consider $E: \Sigma \rightarrow \mathbb{R}^d$ be a embedding table, where $d = |\Sigma| + 1$ is the embedding dimension, such that 
$$
\forall \sigma\in \Sigma, \quad E(\sigma) = (E_\Sigma(\sigma), 1).
$$
\paragraph{Query and Key matrices.} We are going to define matrices $W^Q, W^K\in \mathbb{R}^{2\times d}$. Let $v$ be a non-zero $2$-dimensional column vector (to define). Let $\theta$ some angle (to define). Given that our embedding are one-hot, except the last coordinate (bias), each column of $W^Q$ and $W^K$ is determined by symbols in the vocabulary plus bias. Indeed, for $\sigma \in \Sigma$,
$$
W^QE(\sigma) = \begin{cases}
    v & \sigma \in \textsc{Alph} \\
    \rho({\theta})^{-\sigma}v & \sigma \in \textsc{Int}
\end{cases} \text{ and } \quad W^KE(\sigma) = v
$$
where $\rho(\theta)$ is rotatory matrix with angle $\theta$.

\paragraph{Dot-product with RoPE.} For our construction, we will require a RoPE with only one angle $\theta_\text{RoPE}$, that we will take to be $\theta_\text{RoPE}=\theta$. For $\sigma_\textrm{query}, \sigma_\textrm{key} \in \Sigma$ and positions $n$ and $j \in\{1,...,n\}$, the logit function is computed as follows
$$
(W^KE(\sigma_\textrm{key}))^\top \rho(\theta)^{n-j} W^QE(\sigma_\textrm{query}) = \begin{cases}
    v^\top \rho({\theta})^{n-j} v & \sigma_\textrm{query} \in \textsc{Alph} \\
    v^\top \rho(\theta)^{n-j}\rho(\theta)^{-\sigma_\textrm{query}}v & \sigma_\textrm{query} \in \textsc{Int}
\end{cases}
$$
Note that for $\sigma \in \textsc{Int}$,
$$
v^\top \rho(\theta)^{n-j}\rho(\theta)^{-\sigma}v = v^\top \rho(\theta)^{n-j-\sigma}v = |v|^2\cos((n-j-\sigma)\theta).
$$
In particular, for any $v\neq0$, the previous quantity is maximal at $j=n-\sigma$. 
\paragraph{Logit computation} Let $\bar{\sigma}\in \Sigma^n$ be an input of length $n>1$. For simplicity, consider $v=(0,1)$, then the logit function results into 
$$
L_{\theta}(E(\sigma_n), n, E(\sigma_j), j) = \begin{cases}
    \cos((n-j)\theta) & \sigma_n \in \textsc{Alph} \\
    \cos((n-j-\sigma_n)\theta) & \sigma_n \in \textsc{Int}
\end{cases}
$$
Notice that, for all $\theta\neq0$, if $\sigma_n \in \textsc{Alph}$, position $n$ maximize the logit, while position $n-\sigma_n$ maximize the logit when $\sigma_n \in \textsc{Int}$. In both cases the logit equals to $1$. However, in order for the construction to work, we also need this maximum to be unique. 

\paragraph{Discrepancy computation.} Let us define the discrepancy for an input $\bar{\sigma}$ as the difference between the largest logit value and any other:
$$
\Delta_\theta(\bar{\sigma}) = 1 - \begin{cases}
 \max \lbrace \cos((n-j)\theta): 1 \leq j < n\rbrace  & \sigma_n \in \textsc{Alph} \\
\max \lbrace \cos((n-j -\sigma_n)\theta): 1 \leq j \leq n, j \neq n-\sigma_n \rbrace & \sigma_n \in \textsc{Int} \\
\end{cases}
$$
Note that this quantity also controls the robustness of the mechanism to perturbation or errors in the numerical computations (e.g. due to round-off). In particular, if $\theta \notin \mathbb{Q}$ then 
$$
\forall n> 1, \quad \Delta_{\theta}(\bar{\sigma})>0,
$$
this is because $\cos(t\theta)$ is maximal at $t \in \frac{2\pi}{\theta}\mathbb{Z}$. 

Let us define the discrepancy for inputs of length $n$ as follows
$$
\Delta_{\theta}(n) := \max_{\bar{\sigma}} \Delta_\theta(\bar{\sigma}).
$$
Notice that this discrepancy (for any input $\bar{\sigma}$) is upper-bounded by
$$
\Delta_{\theta}( n) \leq 1 - \cos(\theta)
$$
which highlight the fact that $\theta$ can not be too small. 

On the other hand, the discrepancy (for any input $\bar{\sigma}$) is also affected by the length $n$ of the input sequence. Indeed, we have
$$
\Delta_{\theta}(n)\leq 2 \frac{\pi^2}{n^2}.
$$
The proof of this result is given as follows. Consider $n$ sub-intervals of length $2\pi/n$ for the interval $[0, 2\pi)$. Let $\Theta_n = \{t\theta \mod 2\pi: t\in\{0,1,...,n\}\}$ be a set of $n+1$ angles. Given that we have $n+1$ angles and $n$ sub-intervals, then it should be exist two different angles $\theta_i, \theta_j \in \Theta_n$ inside the same sub-interval. In this way, using $i-j = t \in \{1,...,n\}$, we have that $|t\theta \mod 2\pi| \leq 2\pi/n$. We conclude using the fact that $1-\cos(x) \leq x^2/2$ for $x\in \mathbb{R}$.

From now we consider $\theta$ be irrational. In this way, for input of length at least $n>1$, the logit function has an unique maximum.

\paragraph{Residual Stream and Attention Output.} We consider an additive residual stream of the form
\[
F(x,a) = x + a ,
\]
where $x \in \mathbb{R}^d$ is the input representation and $a \in \mathbb{R}^d$ is the attention output.  
The attention output is a weighted sum of value vectors. To formalize this, we introduce a value  matrix $W^V \in \mathbb{R}^{d \times d}$. We define $W^V$ to be diagonal. For each coordinate $c \in \{1,\dots,d\}$,
\[
\left( W^V \right)_{cc} =
\begin{cases}
\alpha>1 & \exists\, \sigma \in \textsc{Alph} \text{ such that } E_\Sigma(\sigma)_c = 1, \\
0 & \text{otherwise}.
\end{cases}
\]
From now we set $\alpha = 2$.
\paragraph{Average Hard Attention.} Let $\sigma_{j(\bar{\sigma})}$ denote the most attended token. For our construction this position is unique. Then the average hard attention act as unique hard attention. Under unique hard attention, the attention output is
\[
a_n(\bar{\sigma}) = W^V E(\sigma_{j(\bar{\sigma})}).
\]
The residual stream becomes
\[
R(\sigma_n, \sigma_{j(\bar{\sigma})}) := F(E(\sigma_n), a_n(\bar{\sigma}))
= E(\sigma_n) + W^V E(\sigma_{j(\bar{\sigma})}).
\]
\paragraph{Unembedding Step.} Let $\sigma \in \Sigma$ be a candidate output symbol. The output logit associated with $\sigma$ is
\[
L_{\mathrm{Out}}(\sigma) = F(E(\sigma_n), a_n(\bar{\sigma}))^\top (E_\Sigma(\sigma), 0).
\]
Since $E_\Sigma(\sigma)$ is one-hot, maximizing $L_{\mathrm{Out}}(\sigma)$ is equivalent to requiring that the coordinate corresponding to $\sigma$ is strictly larger than all others.
We now show that, under average hard attention, the correct coordinate is uniquely maximized, with gap at least $1$.
\subparagraph{Case 1: $\sigma_n \in \textsc{Alph}$} In this case, the most attended token satisfies $j(\bar{\sigma}) = n$, hence $\sigma_{j(\bar{\sigma})} = \sigma_n \in \textsc{Alph}$. In this cases, the residual stream equals to three times the input embedding:
$$
R(\sigma_n, \sigma_n) = 3 E(\sigma_n).
$$
Then the correct coordinate is uniquely maximized and with gap equal to $3$.
\subparagraph{Case 2: $\sigma_n \in \textsc{Int}$ and $\sigma_{j(\bar{\sigma})} \in \textsc{Alph}$} Let $c^* = E_\Sigma^{-1}(\sigma_{j(\bar{\sigma})})$. Then
\[
R(\sigma_n, \sigma_{j(\bar{\sigma})})_{c^*} = 0+2,
\]
while the largest competing coordinate $c = E_\Sigma^{-1}(\sigma_n)$ satisfies
\[
R(\sigma_n, \sigma_{j(\bar{\sigma})})_c = 1+0.
\]
Hence the correct coordinate is maximized with gap at least $1$.
\subparagraph{Case 3: $\sigma_n \in \textsc{Int}$ and $\sigma_{j(\bar{\sigma})} \in \textsc{Int}$} In this case, the value projection vanishes and the residual stream equals the input embedding:
\[
R(\sigma_n, \sigma_{j(\bar{\sigma})}) = E(\sigma_n).
\]
Again, the correct coordinate is uniquely maximized and with gap equal to $1$.

\paragraph{Conclusion, Average Hard Attention and Softmax.} In all cases, the output logit $L_{\mathrm{Out}}(\sigma)$ is maximized at the correct symbol $\sigma$, and the minimum decoding gap $\delta$ between coordinates is at least $1$. Since the decoding gap is bounded below by $\delta \ge 1$, and the minimum discrepancy induced by irrational angle $\theta$ is strictly positive, Lemma~\ref{lemma:scaled_logit} implies that the same conclusion holds, for  inputs up to a certain length, when attention is computed using softmax instead of average hard attention.

\begin{lemma}[From Average Hard Attention to Softmax Attention] \label{lemma:scaled_logit} Let $\Sigma$ be a finite vocabulary and let $n_0 > 1$ be a fixed maximum input length.  
Let $E:\Sigma \to \mathbb{R}^d$ be an embedding function and let
\[
L:\mathbb{R}^d \times \mathbb{N} \times \mathbb{R}^d \times \mathbb{N} \to \mathbb{R}
\]
be a logit function. For an input $\bar{\sigma} = (\sigma_1,\dots,\sigma_n) \in \Sigma^n$, $n \le n_0$, define
\[
\mu_L(\bar{\sigma}) = \max_{1 \le j \le n} L(E(\sigma_n), n, E(\sigma_j), j),
\]
and let
\[
J(\bar{\sigma}) = \left\{ j \le n : L(E(\sigma_n), n, E(\sigma_j), j) = \mu_L(\bar{\sigma}) \right\}.
\]
Let $W^V \in \mathbb{R}^{d \times d}$ and define the softmax attention weights
\[
w_n^j(\bar{\sigma}; \beta) =
\frac{\exp\bigl(\beta L(E(\sigma_n), n, E(\sigma_j), j)\bigr)}
{\sum_{i=1}^n \exp\bigl(\beta L(E(\sigma_n), n, E(\sigma_i), i)\bigr)},
\quad \beta > 0,
\]
with attention output
\[
a_n^\beta(\bar{\sigma}) = \sum_{j=1}^n w_n^j(\bar{\sigma}; \beta)\, W^V E(\sigma_j).
\]
Define the average hard-attention output
\[
a_n(\bar{\sigma})
= \frac{1}{|J(\bar{\sigma})|}
\sum_{j \in J(\bar{\sigma})} W^V E(\sigma_j).
\]
Let $F:\mathbb{R}^d \times \mathbb{R}^d \to \mathbb{R}^{|\Sigma|}$ be continuous, and define the decoding gap
\[
\delta :=
\min_{\substack{\bar{\sigma} \in \Sigma^n,\, n \le n_0}}
\ \min_{c \neq c^*(\bar{\sigma})}
\left(
F(E(\sigma_n), a_n(\bar{\sigma}))_{c^*(\bar{\sigma})}
-
F(E(\sigma_n), a_n(\bar{\sigma}))_{c}
\right)
\]
where
\[
c^*(\bar{\sigma}) := \arg\max_{1 \le c \le |\Sigma|}
F(E(\sigma_n), a_n(\bar{\sigma}))_c
\]
is assumed to be unique. If $\delta >0$, then there exists $\beta > 0$ such that for all $\bar{\sigma} \in \Sigma^n$, $n \le n_0$, and all $c \neq c^*(\bar{\sigma})$,
\[
F(E(\sigma_n), a_n^\beta(\bar{\sigma}))_{c^*(\bar{\sigma})}
>
F(E(\sigma_n), a_n^\beta(\bar{\sigma}))_c.
\]
In particular,
\[
\arg\max_{1 \le c \le |\Sigma|} F(E(\bar{\sigma}), a_n^\beta(\bar{\sigma}))_c
=
\arg\max_{1 \le c \le |\Sigma|} F(E(\bar{\sigma}), a_n(\bar{\sigma}))_c.
\]
\end{lemma}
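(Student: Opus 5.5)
The plan is a compactness-plus-continuity argument: there are finitely many inputs, the softmax output converges to the average-hard output as $\beta\to\infty$, and continuity of $F$ then preserves the strict gap $\delta>0$. Since $\Sigma$ is finite and $n\le n_0$, the set of admissible inputs $\mathcal{S}=\bigcup_{n\le n_0}\Sigma^n$ is finite. It therefore suffices to find, for each fixed $\bar\sigma\in\mathcal{S}$, a threshold $\beta(\bar\sigma)$ beyond which the strict inequality holds, and then take $\beta=\max_{\bar\sigma\in\mathcal{S}}\beta(\bar\sigma)$. The threshold must also be monotone: the inequality should hold for every $\beta'\ge\beta(\bar\sigma)$. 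The convergence argument below provides exactly this.

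Fix $\bar\sigma$ and write $\lambda_j=L(E(\sigma_n),n,E(\sigma_j),j)$ and $\mu=\mu_L(\bar\sigma)$. Let $\gamma(\bar\sigma)=\mu-\max_{j\notin J}\lambda_j>0$, with the convention $\gamma=\infty$ if $J=[n]$. Dividing numerator and denominator by $e^{\beta\mu}$ gives, for $j\in J$,
\[
w_n^j=\frac{1}{|J|+\sum_{i\notin J}e^{-\beta(\mu-\lambda_i)}}.
\]
For $j\notin J$ we get $w_n^j\le e^{-\beta\gamma}$. Hence, with $M=\max_{\sigma\in\Sigma}\|W^VE(\sigma)\|$,
\[
\|a_n^\beta(\bar\sigma)-a_n(\bar\sigma)\|\le \sum_{j\in J}\Bigl|w_n^j-\tfrac{1}{|J|}\Bigr|M+\sum_{j\notin J}w_n^jM\le 2nMe^{-\beta\gamma}.
\]
This bound tends to $0$ monotonically as $\beta\to\infty$.

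Next, continuity of $F$ at the point $(E(\sigma_n),a_n(\bar\sigma))$ gives an $\eta>0$ such that $\|a-a_n(\bar\sigma)\|<\eta$ implies every coordinate of $F(E(\sigma_n),a)$ is within $\delta/2$ of the corresponding coordinate of $F(E(\sigma_n),a_n(\bar\sigma))$. Then for any $c\ne c^*$,
\[
F(\cdot,a)_{c^*}-F(\cdot,a)_c>\bigl(F(\cdot,a_n)_{c^*}-F(\cdot,a_n)_c\bigr)-\delta\ge0.
\]
Choosing $\beta(\bar\sigma)$ with $2nMe^{-\beta(\bar\sigma)\gamma}<\eta$ completes the per-input step. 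The "in particular" claim about the argmax follows immediately.

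The only delicate point is making sure the estimate really is uniform. This is handled by finiteness of $\mathcal{S}$, which makes $\min\gamma>0$ and the continuity moduli finitely many. In the RoPE setting, the positivity of $\gamma$ is precisely the positive discrepancy established above for irrational $\theta$. Without finiteness, for example with unbounded $n$, the argument would fail, in line with the dispersion issue discussed in the Robustness Analysis. One formality remains: the lemma's $\beta$ rescales logits, so it should be absorbed into $W^Q$ (scaling $v$ by $\sqrt\beta$) to stay within the architecture.
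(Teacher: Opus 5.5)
Your proof is correct and follows essentially the same route as the paper. Both arguments bound the softmax weights on and off $J(\bar\sigma)$ to get $a_n^\beta(\bar\sigma)\to a_n(\bar\sigma)$, use finiteness of $\bigcup_{n\le n_0}\Sigma^n$ to make this uniform, and then apply continuity of $F$ with tolerance below $\delta/2$ to keep the strict ordering; your explicit rate $2nMe^{-\beta\gamma}$ with monotone per-input thresholds simply makes concrete the paper's ``uniform convergence over a finite set'' step.
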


\begin{proof} Fix an input $\bar{\sigma} \in \Sigma^n$, $n \le n_0$. Define the discrepancy for $\bar{\sigma}$ by
\[
\Delta(\bar{\sigma}) =
\begin{cases}
\mu_L(\bar{\sigma}) - \max\{ L(E(\sigma_n), n, E(\sigma_j), j) : j \notin J(\bar{\sigma}) \}
& |J(\bar{\sigma})| < n, \\
0 & |J(\bar{\sigma})| = n.
\end{cases}
\]
For each $n \le n_0$, define the minimum discrepancy for input length $n$ as
\[
\lambda(n) =
\begin{cases}
\min\{ \Delta(\bar{\sigma}) : \bar{\sigma} \in \Sigma^n,\ \Delta(\bar{\sigma}) > 0 \}
& \text{if such } \bar{\sigma} \text{ exists}, \\
0 & \text{otherwise}.
\end{cases}
\]
Define the total discrepancy as follows
\[
\lambda_{\leq n_0} := \min_{1 < n \le n_0} \lambda(n)
\]
Notice that if $|J(\bar{\sigma})| = n$, then all logits are equal and
$w_n^j(\beta) = 1/n$ for all $\beta$, so
$a_n^\beta(\bar{\sigma}) = a_n(\bar{\sigma})$ and the claim holds trivially.
Hence assume $|J(\bar{\sigma})| = k < n$.
For $j \in J(\bar{\sigma})$, we have
\[
L(E(\sigma_n), n, E(\sigma_j), j) = \mu_L(\bar{\sigma}),
\]
while for $j \notin J(\bar{\sigma})$,
\[
L(E(\sigma_n), n, E(\sigma_j), j)
\le \mu_L(\bar{\sigma}) - \lambda_{\leq n_0}.
\]
Therefore, for $\beta \geq 1$,
\[
k e^{\mu_L(\bar{\sigma})} \leq \sum_{i=1}^n e^{\beta L(E(\sigma_n), n, E(\sigma_i), i)}
\le
k e^{\beta \mu_L(\bar{\sigma})}
+ (n-k) e^{\beta(\mu_L(\bar{\sigma})-\lambda_{\leq n_0})}.
\]
This yields, for $j \in J(\bar{\sigma})$,
\[
\frac{1}{k}
\geq
w_n^j(\bar{\sigma};\beta)
\geq
\frac{1}{k + (n-k)e^{-\beta\lambda_{\leq n_0}}} \rightarrow \frac{1}{k} 
\quad \text{as } \beta \rightarrow \infty.
\]
On the other hand, for $j \notin J(\bar{\sigma})$, 
\[
0\leq w_n^j(\bar{\sigma};\beta)
\le
\frac{e^{-\beta(\mu_L(\bar{\sigma}) - L(E(\sigma_n), n, E(\sigma_j), j))}}{k} \rightarrow 0 
\quad \text{as } \beta \rightarrow \infty.
\]
Hence,
\[
a_n^\beta(\bar{\sigma})
\rightarrow
\frac{1}{k} \sum_{j \in J(\bar{\sigma})} W^V E(\sigma_j)
= a_n(\bar{\sigma})
\quad \text{as } \beta \rightarrow \infty.
\]
Since $\Sigma$ is finite and $n_0 < \infty$, the set of all inputs
$\bigcup_{n \le n_0} \Sigma^n$ is finite. Therefore, the above convergence
is uniform over all such inputs.
By continuity of $F$, for any $\epsilon > 0$ there exists $\beta\geq1$,
\[
\max_{c}
\left|
F(E(\sigma_n), a_n^\beta(\bar{\sigma}))_c
-
F(E(\sigma_n), a_n(\bar{\sigma}))_c
\right|
\le \epsilon
\]
uniformly over all $\bar{\sigma}$ with $n \le n_0$. Choose $\epsilon \in (0,\delta/2)$. Then for any $c \neq c^*(\bar{\sigma})$,
\begin{align*}
F(E(\sigma_n), a_n^\beta(\bar{\sigma}))_c
&\le
F(E(\sigma_n), a_n(\bar{\sigma}))_c + \epsilon \\
&\le
F(E(\sigma_n), a_n(\bar{\sigma}))_{c^*(\bar{\sigma})}
- \delta + \epsilon \\
&<
F(E(\sigma_n), a_n(\bar{\sigma}))_{c^*(\bar{\sigma})}
- \epsilon \\
&<
F(E(\sigma_n), a_n^\beta(\bar{\sigma}))_{c^*(\bar{\sigma})}.
\end{align*}
This proves the claim.
\end{proof}

\subsubsection{Retrieval function}

\paragraph{Requirements.} Let $E_\Sigma$ be a one-hot embedding over $\Sigma = \Sigma_\text{L} \times \Sigma_\text{R}$, where $\Sigma_\text{L} = \textsc{Alph}$ and $\Sigma_\text{R} = \textsc{Alph} \cup \textsc{Int}$. Let $\phi_\textsc{Alph}: \textsc{Alph} \rightarrow \{1,...,|\textsc{Alph}|\}$ be some enumeration of symbols from $\textsc{Alph}$ into numbers. Let $\phi: \textsc{Alph}\cup \textsc{Int} \rightarrow \{1,...,|\textsc{Alph}|, |\textsc{Alph}| + 1, ..., |\textsc{Alph}| + |\textsc{Int}|\}$ be an extension of $\phi_\textsc{Alph}$, such that $\phi(\sigma) = \phi_\textsc{Alph}(\sigma)$ for $\sigma \in \textsc{Alph}$, and $\phi(i) = |\textsc{Alph}| + i$ for $i\in \textsc{Int}$. Without loss of generality, set $E_\Sigma^{-1}$ be consistent with $\phi$, in the following sense, for $\sigma=\sigma^L\sigma^R \in \Sigma$,
$$
E_\Sigma^{-1}(\sigma) = (\phi(\sigma^\text{L}) - 1) |\Sigma_\text{R}| + \phi(\sigma^\text{R}).
$$
\paragraph{Query and Key matrices.} We are going to define matrices $W^Q, W^K\in \mathbb{R}^{2\times d}$. Let $v$ be a non-zero $2$-dimensional column vector (to define). Let $\omega$ some angle (to define). Given that our embedding are one-hot each column of $W^Q$ and $W^K$ is determined by symbols in the vocabulary. Indeed, for $\sigma \in \Sigma$,
$$
W^QE(\sigma) = \begin{cases}
    \rho({\omega})^{-\phi(\sigma^\text{R})}v & \sigma \in \textsc{Alph} \times \textsc{Alph} \\
    \rho({\omega})^{-\phi(\sigma^\text{L})}v & \text{otherwise}
\end{cases} \text{ and } \quad W^KE(\sigma) = \rho({\omega})^{-\phi(\sigma^\text{L})}v.
$$
where $\rho(\omega)$ is rotatory matrix with angle $\omega$.

\paragraph{Dot-product with RoPE.} For our construction, we are going to include RoPE in order to exhibit how this positional encoding induce length generalization failure. Let $\theta$ be one angle for RoPE (to define). For $\sigma_\textrm{query}, \sigma_\textrm{key} \in \Sigma$ and positions $n$ and $j \in\{1,...,n\}$, the logit function is computed as follows
$$
(W^KE(\sigma_\textrm{key})^\top  \rho(\theta)^{n-j}W^QE(\sigma_\textrm{query}) = \begin{cases}
    v^\top \rho({\omega})^{\phi(\sigma_\textrm{key}^\text{L})}  \rho(\theta)^{n-j}\rho({\omega})^{-\phi(\sigma_\textrm{query}^\text{R})}v & \sigma_\textrm{query} \in \textsc{Alph} \times \textsc{Alph} \\
    v^\top \rho({\omega})^{\phi(\sigma_\textrm{key}^\text{L})}  \rho(\theta)^{n-j} \rho({\omega})^{-\phi(\sigma_\textrm{query}^\text{L})}v & \text{otherwise}
\end{cases}
$$
\paragraph{Logit computation.} Let $\bar{\sigma} \in \Sigma^n$ be an input of length $n>1$. For simplicity consider $v=(0,1)$, then the logit function results into
$$
L_{\theta}(E(\sigma_n), n, E(\sigma_j), j) = \begin{cases}
    \cos( (\phi(\sigma_j^\text{L})-\phi(\sigma_n^\text{R}))\omega +(n-j)\theta ) & \sigma_n \in \textsc{Alph} \times \textsc{Alph} \\
    \cos((\phi(\sigma_j^\text{L})-\phi(\sigma_n^\text{L}))\omega +(n-j)\theta)  & \text{otherwise}
\end{cases}
$$
Let $n_0 > 1$. Notice that for $0<\theta < \omega/2 n_0$, we have that for $n\leq n_0$ if exists $\sigma_j$ with left-letter equal to the query letter ($\sigma_n^L$ or $\sigma_n^R$ depending on the input) at certain position $j=j(\bar{\sigma})$, then
$$
\mu_L(\bar{\sigma}) = \max_{1\leq j\leq n} L_\theta(E(\sigma_n, n, E(\sigma_j), j) =  \cos((n-j(\bar{\sigma}))\theta).
$$

Notice that \textit{(i)} if two (key) symbols in the sequence $\sigma_{j_1}$ and $\sigma_{j_2}$, at different positions, both have the maximum logit, then this two symbols must have the same left-symbol $\sigma_{j_1}^L = \sigma_{j_2}^L$. Then, from the assumption that $\bar{\sigma}$ is a proper input (see Definition \ref{def:retrieval}), this implies that $\sigma_{j_1} = \sigma_{j_2}$ are the same symbol. On the other hand, \textit{(ii)} if all the symbol $\sigma_j$, with $j\leq n$, has the same left-symbol $\sigma_j^L$, then all the sequence is attended equally. Then by the property \textit{(i)} the input sequence is a constant sequence. In any other case, the logit is not constant. For the discrepancy computation, we are going to assume that the input sequence is not constant. 

\paragraph{Discrepancy computation.} The discrepancy for this mechanism is give by
$$
\Delta_{\theta}(\bar{\sigma}) = \cos((n-j(\bar{\sigma}))\theta) - \max_{\substack{1 < k \leq |\textsc{Alph}| \\ k \neq \phi(\sigma_{j(\bar{\sigma})}) \\ 1 \leq j \leq n}} \cos((k-1)\omega + (n-j)\theta) \leq 1-\cos(\omega - n\theta).
$$
In this way, the discrepancy is upper-bounded by
$$
\Delta_\theta(n) \leq 1-\cos(\omega - n\theta).
$$

\paragraph{Residual Stream and Attention Output.} We consider a attention-only residual stream of the form
$$
F(x, a) = a
$$
where $x \in \mathbb{R}^d$ is the input representation and $a\in \mathbb{R}^d$ is the attention output. Note that the construction could also be done with a non-zero residual stream, but this would only make the proof more technical without affecting the discrepancy analysis, so we opt for keeping it as simple as possible.

The attention output is a weighted sum of value vectors. For our construction, we define the value matrix $W^V \in \mathbb{R}^{d\times d}$ to be the $d\times d$ identity function. 
\paragraph{Hard Attention} Let $\sigma_{j(\bar{\sigma})}$ denote any most attended token. If are multiple candidates, then by construction all the most attended tokens must be associated to the same symbol. Under average hard attention, the attention output is
$$
a_n(\bar{\sigma}) = W^VE(\sigma_{j(\bar{\sigma})}) = E(\sigma_{j(\bar{\sigma})}).
$$
The residual stream becomes
$$
R(\sigma_n, \sigma_{j(\bar{\sigma})}) := F(E(\sigma_n), a_n(\bar{\sigma})) = E(\sigma_{j(\bar{\sigma})})
$$
\paragraph{Unembedding Step.} Let $\sigma \in \Sigma$ be a candidate output symbol. The output logit associated with $\sigma$ is
$$
L_\text{Out}(\sigma) = F(E(\sigma_n), a_n(\bar{\sigma}))^\top E_\Sigma(\sigma).
$$
Since $E_\Sigma(\sigma)$ is one-hot, maximizing $L_\text{Out}(\sigma)$ is equivalent to requiring that the coordinate corresponding to $\sigma$ is strictly larger than all others. We now show that, under hard attention, the correct coordinate is uniquely maximized, with gap at least $1$.
\subparagraph{Case 1: $\sigma_n \in \textsc{Alph} \times \textsc{Int}$} In this case, the most attended token satisfies $j(\bar{\sigma}) = n$, hence $\sigma_{j(\bar{\sigma})} = \sigma_n \in \textsc{Alph}$. In this cases, the residual stream equals the input embedding:
$$
R(\sigma_n, \sigma_n) = E(\sigma_n).
$$
Then the correct coordinate is uniquely maximized and with gap equal to $1$.
\subparagraph{Case 2: $\sigma_n \in \textsc{Alph} \times \textsc{Alph}$} In this case, the residual stream equals the attention output. Again, the correct coordinate is uniquely maximized at $c^*=E_\Sigma^{-1}(\sigma_{j(\bar{\sigma})})$ and with gap at least $1$. 
\paragraph{Conclusion, Average Hard Attention and Softmax} In both cases, the output logit $L_{\mathrm{Out}}(\sigma)$ is maximized at the correct symbol $\sigma$, and the minimum decoding gap $\delta$ between coordinates is at least $1$. Since the decoding gap is bounded below by $\delta \ge 1$, and the minimum discrepancy (on non-constant inputs) induced by angle $\theta=2\pi/|\textsc{Alph}|$ is strictly positive, Lemma~\ref{lemma:scaled_logit} implies that the same conclusion holds, for bounded inputs, when attention is computed using softmax instead of average hard attention.

\subsubsection{Proof of Remark \ref{rem:impossibility}}\label{app:proof_rem}

We start by showing that a purely symbolic head cannot compute the Selective Index function. The proof is similar to that of Theorem 2 from \cite{urrutia2025decoupling}. Let $T$ be a one-layer model with a purely symbolic attention head. By Lemma 1 from \cite{urrutia2025decoupling}, it holds that $T$ is invariant under permutations of tokens in the input. Then, by swapping two letters in the target zone of an input chosen such that the obtained input yields a different answer, we obtain that, since the model outputs the same answer for both, must give an incorrect answer in one of them. The proof for the Retrieval function follows from Theorem 4 in \cite{urrutia2025decoupling} by noticing that our Retrieval function can be expressed as a sequence-to-sequence version of their Information Retrieval Task.  

\newpage
\section{Generalization}
\label{app:generalization}

\begin{figure*}[htbp!]
    \centering
    \includegraphics[width=0.8\linewidth]{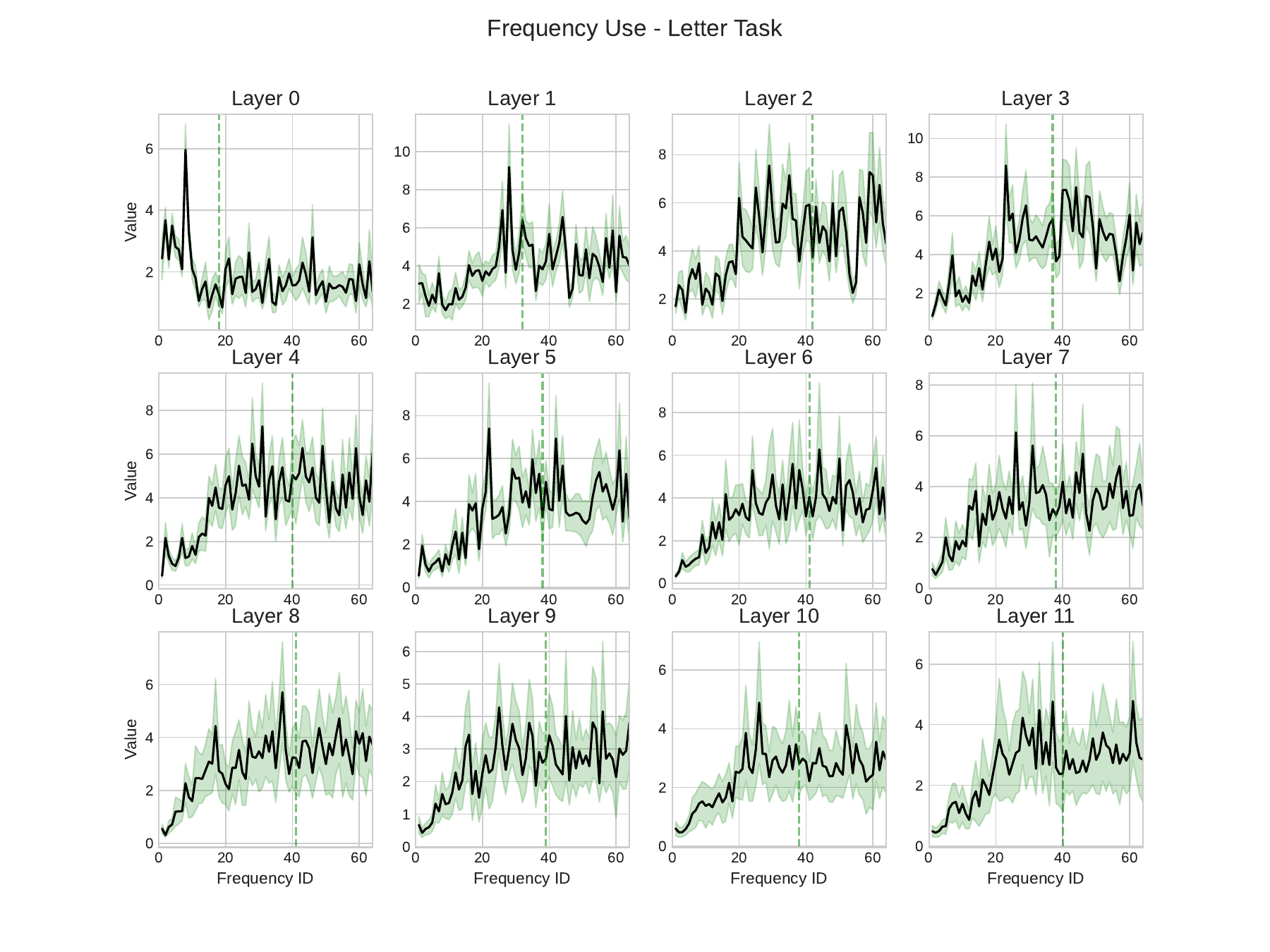}
    \includegraphics[width=0.8\linewidth]{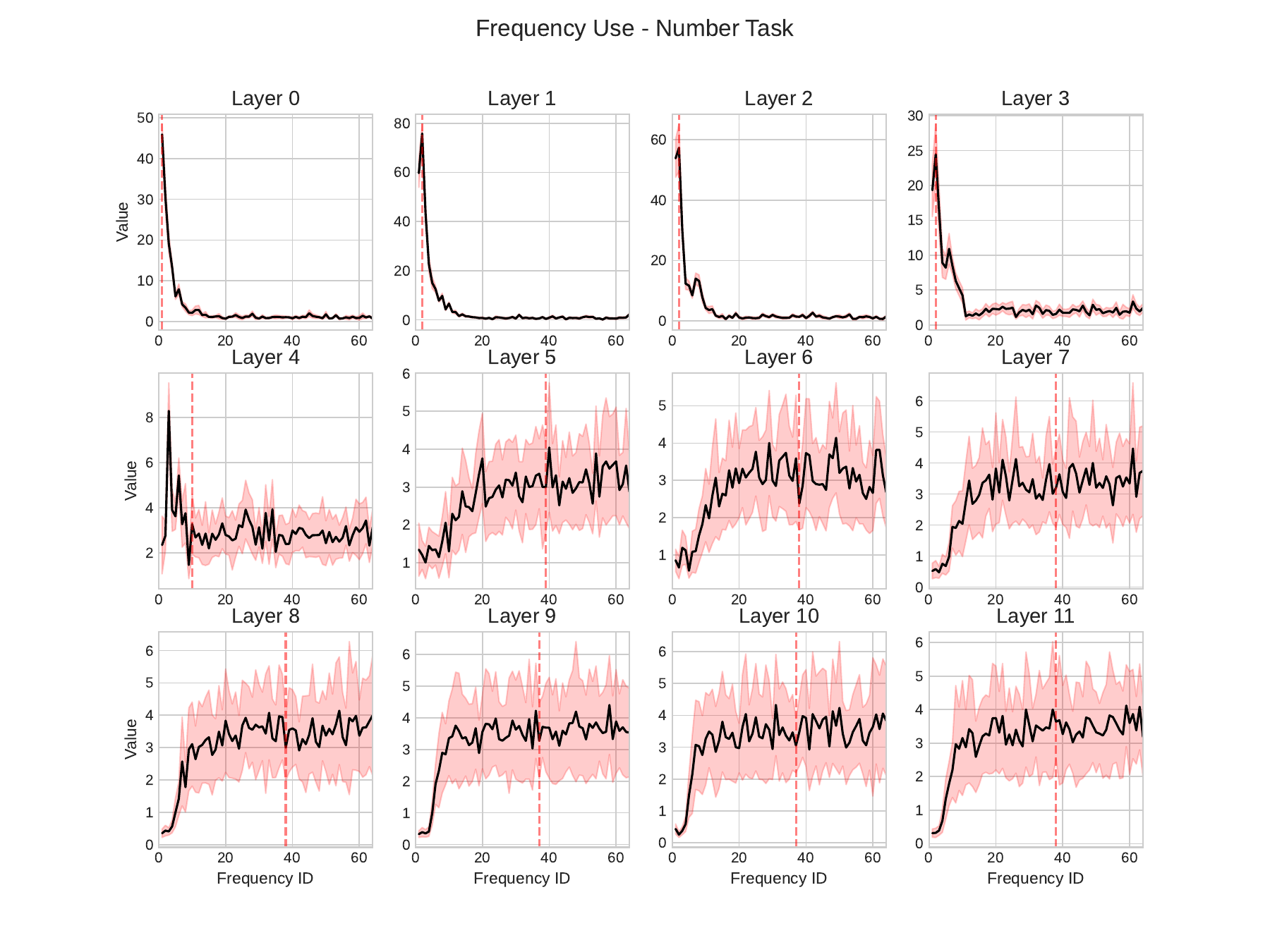}
    \caption{Frequency Use for the Letter and Number task.}
    \label{fig:angles}
\end{figure*}

\begin{table}[]
\centering
\begin{tabular}{|r|rr|rr|}
\hline
\multicolumn{1}{|l|}{} & \multicolumn{2}{l|}{Number Task}              & \multicolumn{2}{l|}{Letter Task}              \\ \hline
Layer                  & \multicolumn{1}{r|}{Frequency ID} & Frequency & \multicolumn{1}{r|}{Frequency ID} & Frequency \\ \hline
0                      & \multicolumn{1}{r|}{1}            & 1.0000    & \multicolumn{1}{r|}{18}           & 0.0866    \\ \hline
1                      & \multicolumn{1}{r|}{2}            & 0.8660    & \multicolumn{1}{r|}{32}           & 0.0115    \\ \hline
2                      & \multicolumn{1}{r|}{2}            & 0.8660    & \multicolumn{1}{r|}{42}             & 0.0027    \\ \hline
3                      & \multicolumn{1}{r|}{2}            & 0.8660    & \multicolumn{1}{r|}{37}           & 0.0056    \\ \hline
4                      & \multicolumn{1}{r|}{10}           & 0.2738    & \multicolumn{1}{r|}{40}           & 0.0037    \\ \hline
5                      & \multicolumn{1}{r|}{39}           & 0.0042    & \multicolumn{1}{r|}{38}           & 0.0049    \\ \hline
6                      & \multicolumn{1}{r|}{38}           & 0.0049    & \multicolumn{1}{r|}{41}           & 0.0032    \\ \hline
7                      & \multicolumn{1}{r|}{38}           & 0.0049    & \multicolumn{1}{r|}{38}           & 0.0049    \\ \hline
8                      & \multicolumn{1}{r|}{38}           & 0.0049    & \multicolumn{1}{r|}{41}           & 0.0032    \\ \hline
9                      & \multicolumn{1}{r|}{37}           & 0.0056    & \multicolumn{1}{r|}{39}           & 0.0042    \\ \hline
10                     & \multicolumn{1}{r|}{37}           & 0.0056    & \multicolumn{1}{r|}{38}           & 0.0049    \\ \hline
11                     & \multicolumn{1}{r|}{38}           & 0.0049    & \multicolumn{1}{r|}{40}           & 0.0037    \\ \hline
\end{tabular}
\caption{Weighted averaged ID's and the corresponding frequencies.}\label{table:angles}
\end{table}

\section{Generalization on LLMs}
\label{app:generalization_llm}

To test generalization in frontier LLMs, we constructed prompts for the 1-hop number task and the 1-hop letter task, each containing natural language instructions, worked examples, and a final query (see Table~\ref{table:prompts}). The prompts explicitly required a single-token answer, and all models were evaluated with reasoning disabled. We then applied two screening criteria. First, since the tasks are non-trivial, we required at least 85\% accuracy on the 1-hop letter task at sequence length 16, ensuring that the base task could be performed before generalization was tested; this criterion ruled out Qwen 3.6 Plus, GPT 4.1, and Deepseek v4 Pro. Second, we required that the model generate a single token rather than a chain-of-thought at longer sequence lengths, which ruled out the more recent frontier Claude models, including Opus 4.6 and Opus 4.7. Three models passed both filters: OpenAI's GPT 5.4 and GPT 5.5, and Anthropic's Claude Sonnet 3.7.

We ran the experiment five times, generating a fresh dataset for each run using the run number as the random seed. Each dataset contained sequences with target windows of sizes \{12, 16, 24, 32, 48, 64, 80, 100, 150, 200, 300\} and a single query token $Q$ appended at the end. Given $Q$, the model's task was to retrieve the corresponding token from the target window: in the 1-hop number task, $Q$ is an integer indicating how many positions to count back from the end; in the 1-hop letter task, $Q=(Q_1,Q_2)$ is a pair, and the answer is the target window token whose first component matches $Q_2$ We aggregated results across the five runs, reporting the mean and standard deviation of accuracy for each model at each sequence length (see Table \ref{table:generalization_llm}).

% Please add the following required packages to your document preamble:
% \usepackage{booktabs}
\begin{table}[]
\centering
\begin{tabular}{@{}lll@{}}
\toprule
Task   & Prompt                                                                                                                                                                                                                                                                                                                                                                                                                                                                                                                                                                                                                                                                                                                                                                                                     &  \\ \midrule
Number & \begin{tabular}[c]{@{}l@{}}You will be given a sequence of space-separated tokens. Your task is to find the target \\ token by following these rules:\\ \\ 1. Identify the very last token in the sequence, which will always be a number `n`.\\ 2. Move `n` positions to the left from that final token.\\ 3. The token you land on is the target token.\\ \\ Output just the target token and absolutely nothing else. Do not include any \\ explanations or formatting.\\ \\ Example 1:\\ Input: "a z b y c x d w 5"\\ Output: y\\ \\ Example 2:\\ Input: "v t k g n k o h m p 4"\\ Output: o\\ \\ Example 3:\\ Input: "w y b c v t r i p p 10"\\ Output: w\\ \\ Task:\\ Input: "\{sequence\}"\\ Output:\end{tabular}                                                                                   &  \\ \cmidrule(r){1-2}
Letter & \begin{tabular}[c]{@{}l@{}}You will be given a sequence of space-separated tokens. Your task is to find the target \\ token by following these rules:\\ \\ 1. Start with the very last token in the sequence.\\ 2. Identify the second character of that token.\\ 3. Scan backwards (to the left) to find the token that starts with that exact character; \\ there is only one token that fulfills this condition.\\ 4. This is your target token.\\ \\ Output just the target token and absolutely nothing else. Do not include any explanations\\ or formatting.\\ \\ Example:\\ Input: "a4 b3 c2 d1 fc"\\ Output: c2\\ \\ Input: "w9 t7 l5 g1 n4 l6 u9 k7 m5 p8 gk"\\ Output: k7\\ \\ Input: "q5 r7 x4 t7 f4 k7 q4 u2 u3 r5 ex"\\ Output: x4\\ \\ Task:\\ Input: "\{sequence\}"\\ Output:\end{tabular} &  \\ \cmidrule(r){1-2}
\end{tabular}
\caption{Prompts used to evaluate LLMs on the 1-hop number and letter tasks.}
\label{table:prompts}
\end{table}

\begin{table}[]
\centering
\begin{tabular}{@{}rrrrrrr@{}}
\toprule
\multicolumn{1}{l}{\textbf{Length}} & \multicolumn{2}{l}{\textbf{GPT 5.4}}                                      & \multicolumn{2}{l}{\textbf{GPT 5.5}}                                      & \multicolumn{2}{l}{\textbf{Claude Sonnet 3.7}}                            \\ \midrule
\multicolumn{1}{l}{\textbf{}}       & \multicolumn{1}{l}{\textbf{Number}} & \multicolumn{1}{l}{\textbf{Letter}} & \multicolumn{1}{l}{\textbf{Number}} & \multicolumn{1}{l}{\textbf{Letter}} & \multicolumn{1}{l}{\textbf{Number}} & \multicolumn{1}{l}{\textbf{Letter}} \\
12                                  & 0.98 (0.01)                         & 0.96 (0.01)                         & 0.55 (0.02)                         & 0.87 (0.02)                         & 0.4 (0.02)                          & 0.92 (0.03)                         \\
16                                  & 0.87 (0.01)                         & 0.98 (0.01)                         & 0.56 (0.03)                         & 0.95 (0.01)                         & 0.34 (0.03)                         & 0.89 (0.02)                         \\
24                                  & 0.73 (0.03)                         & 1.0 (0.01)                          & 0.42 (0.03)                         & 0.96 (0.02)                         & 0.22 (0.02)                         & 0.88 (0.03)                         \\
32                                  & 0.45 (0.03)                         & 0.99 (0.0)                          & 0.23 (0.01)                         & 0.96 (0.01)                         & 0.15 (0.02)                         & 0.88 (0.02)                         \\
48                                  & 0.17 (0.03)                         & 0.99 (0.01)                         & 0.21 (0.03)                         & 0.77 (0.01)                         & 0.11 (0.01)                         & 0.8 (0.02)                          \\
64                                  & 0.14 (0.03)                         & 0.98 (0.01)                         & 0.12 (0.02)                         & 0.84 (0.02)                         & 0.08 (0.03)                         & 0.87 (0.02)                         \\
80                                  & 0.1 (0.02)                          & 0.96 (0.01)                         & 0.1 (0.02)                          & 0.75 (0.01)                         & 0.06 (0.01)                         & 0.89 (0.02)                         \\
100                                 & 0.08 (0.02)                         & 0.85 (0.05)                         & 0.07 (0.02)                         & 0.66 (0.04)                         & 0.06 (0.02)                         & 0.86 (0.03)                         \\
200                                 & 0.08 (0.01)                         & 0.52 (0.03)                         & 0.06 (0.01)                         & 0.55 (0.01)                         & 0.05 (0.01)                         & 0.77 (0.02)                         \\
300                                 & 0.06 (0.01)                         & 0.4 (0.06)                          & 0.04 (0.01)                         & 0.5 (0.03)                          & 0.05 (0.01)                         & 0.8 (0.05)                          \\ \bottomrule
\end{tabular}
\caption{Mean accuracy and standard deviation across runs for the LLM generalization experiment.}
\label{table:generalization_llm}
\end{table}

\clearpage
\section{Compute resources}

All GPT-J experiments were run on a local compute server equipped with an AMD EPYC 7282 16-Core Processor and three NVIDIA RTX A5000 GPUs, each with 24 GB of memory. The experimental data were stored in S3 buckets containing 1,000 objects with a total size of approximately 2.0 TB. The total training time for the reported GPT-J experiments was 54 minutes and 59 seconds. Computing the positional and symbolic metrics for each model and checkpoint required 5 hours of GPU time to generate the attention weights and 8.5 hours of CPU time to compute the metrics. Since we trained two models, the total cost of metric computation was 10 hours of GPU time and 17 hours of CPU time.

\begin{table}[h]
\centering
\caption{Compute, storage, and training time used in the GPT-J experiments.}
\label{tab:compute_resources}
\begin{tabular}{ll}
\hline
\textbf{Resource} & \textbf{Specification} \\
\hline
CPU & AMD EPYC 7282 16-Core Processor \\
GPU 0 & NVIDIA GeForce RTX A5000, 24GB \\
GPU 1 & NVIDIA GeForce RTX A5000, 24GB \\
GPU 2 & NVIDIA GeForce RTX A5000, 24GB \\
Storage & S3 buckets, 1k objects, 2.0TB total \\
\hline
Number Task training time & 28 minutes 30 seconds \\
Letter Task training time & 26 minutes 29 seconds \\
Total GPT-J training time & 54 minutes 59 seconds \\
\hline
Attention weights generation (per model) & 5 hours (GPU) \\
Metrics computation & 8.5 hours (CPU) \\
Total metrics (GPU) & 10 hours (GPU) \\
Total metrics (CPU) & 17 hours (CPU) \\
\hline
\end{tabular}
\end{table}

%%%%%%%%%%%%%%%%%%%%%%%%%%%%%%%%%%%%%%%%%%%%%%%%%%%%%%%%%%%%

\end{document}